\newtheorem{myTheo}{Theorem}
\newtheorem{myPro}{Problem}
\newtheorem{myLemma}{Lemma}
\newcommand{\paratitle}[1]{\vspace{1.5ex}\noindent\textbf{#1}}
\begin{document}
%
\title{Generalizing Graph Neural Networks on Out-Of-Distribution Graphs}

%
%
%
%

\author{Shaohua Fan,
        Xiao Wang,~\IEEEmembership{Member,~IEEE,}
        \IEEEauthorblockN{Chuan Shi\thanks{\IEEEauthorrefmark{2} Corresponding author.}\IEEEauthorrefmark{2}},~\IEEEmembership{Senior Member,~IEEE}, Peng Cui,~\IEEEmembership{Senior Member,~IEEE,} and
        Bai Wang
        
\IEEEcompsocitemizethanks{
\IEEEcompsocthanksitem S. Fan is with the Department
of Computer Science, Beijing University of Posts and Telecommunications,
Beijing 100876, China, and the Department of Computer Science and Technology in
Tsinghua University, Beijing 100084, China.
E-mail: fanshaohua@bupt.cn
\IEEEcompsocthanksitem X. Wang is with the School of Software, Beihang University, Beijing, 100191, China.
Email: xiao\_wang@buaa.edu.cn

\IEEEcompsocthanksitem C. Shi, and B. Wang are with the Department
of Computer Science, Beijing University of Posts and Telecommunications,
Beijing 100876, China.
E-mail: \{shichuan,wangbai\}@bupt.edu.cn.
\IEEEcompsocthanksitem P. Cui is with the Department of Computer Science and Technology in
Tsinghua University, Beijing 100084, China.
E-mail: cuip@tsinghua.edu.cn.}
\thanks{Manuscript received Nov 17, 2021; revised May 1, 2023; revised Sep 15, 2023;}}

%
%

\markboth{Journal of \LaTeX\ Class Files,~Vol.~14, No.~8, August~2015}%
{Shell \MakeLowercase{\textit{et al.}}: Bare Demo of IEEEtran.cls for Computer Society Journals}
%



\IEEEtitleabstractindextext{%
\begin{abstract}
\par Graph Neural Networks (GNNs) are proposed without considering the agnostic distribution shifts between training graphs and testing graphs, inducing the degeneration of the generalization ability of GNNs in Out-Of-Distribution (OOD) settings. The fundamental reason for such degeneration is that most GNNs are developed based on the I.I.D hypothesis. In such a setting, GNNs tend to exploit subtle statistical correlations existing in the training set for predictions, even though it is a spurious correlation. This learning mechanism inherits from the common characteristics of machine learning approaches. However, such spurious correlations may change in the wild testing environments, leading to the failure of GNNs. Therefore, eliminating the impact of spurious correlations is crucial for stable GNN models. To this end, in this paper, we argue that the spurious correlation exists among subgraph-level units and analyze the degeneration of GNN in causal view. Based on the causal view analysis, we propose a general causal representation framework for stable GNN, called StableGNN. The main idea of this framework is to extract high-level representations from raw graph data first and resort to the distinguishing ability of causal inference to help the model get rid of spurious correlations. Particularly, to extract meaningful high-level representations, we exploit a differentiable graph pooling layer to extract subgraph-based representations by an end-to-end manner. Furthermore, inspired by the confounder balancing techniques from causal inference, based on the learned high-level representations, we propose a causal variable distinguishing regularizer to correct the biased training distribution by learning a set of sample weights. Hence, GNNs would concentrate more on the true connection between discriminative substructures and labels. Extensive experiments are conducted on both synthetic datasets with various distribution shift degrees and eight real-world OOD graph datasets. The results well verify that the proposed model StableGNN not only outperforms the state-of-the-arts but also provides a flexible framework to enhance existing GNNs. In addition, the interpretability experiments validate that StableGNN could leverage causal structures for predictions. The source code is available at https://github.com/googlebaba/StableGNN.
\end{abstract}

\begin{IEEEkeywords}
Graph Neural Networks, Out-Of-Distribution Generalization, Causal Representation Learning, Stable Learning.
\end{IEEEkeywords}}

\maketitle

\IEEEdisplaynontitleabstractindextext

%
\IEEEpeerreviewmaketitle

\begin{figure}[!htbp]
	\centering
	\includegraphics[width=8cm]{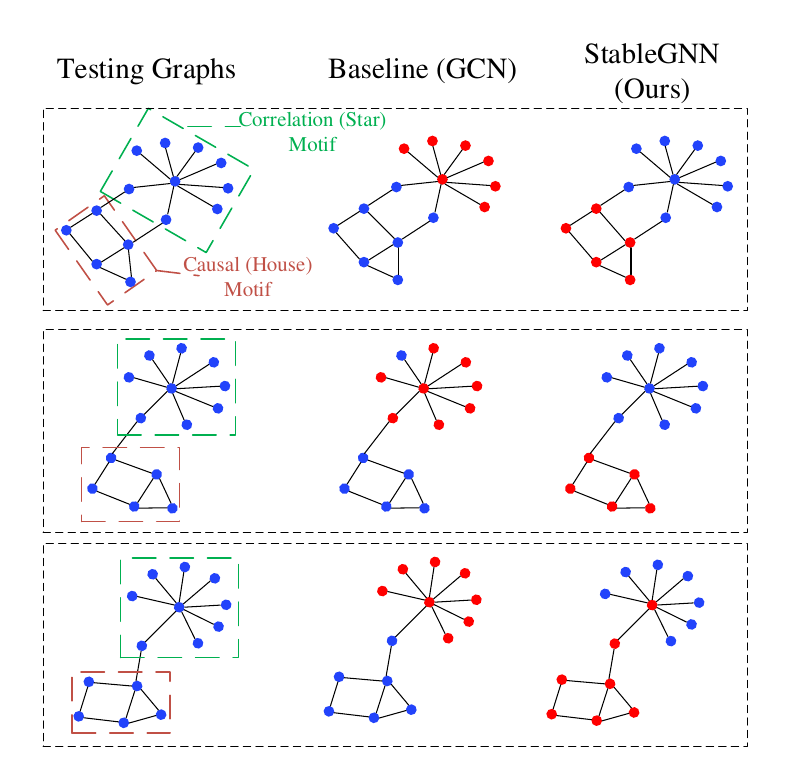}
	\caption{Visualization of subgraph importance for ``house'' motif classification task, produced by the vanilla GCN model and StableGNN when most of training graphs containing ``house'' motifs with ``star'' motifs. The red subgraph indicates the most important subgraph used by the model for prediction (generated by GNNExplainer~\cite{ying2019gnnexplainer}). Due to the spurious correlation, the GCN model tends to focus more on ``star'' motifs while our model focuses mostly on ``house'' motifs. For more cases of testing graphs, please refer to Figure~\ref{fig:syn_explainer}.}
	\label{fig:fig1}
\end{figure}

\IEEEraisesectionheading{\section{Introduction}\label{sec:introduction}}

\IEEEPARstart{G}raph Neural Networks (GNNs) are powerful deep learning algorithms on graphs with various applications~\cite{scarselli2008graph,kipf2016semi,velivckovic2017graph,hamilton2017inductive}. One major category of applications is the predictive task over entire graphs, i.e., graph-level tasks, such as molecular graph property prediction~\cite{hu2020open,lee2018graph,ying2018hierarchical}, scene graph classification~\cite{pope2019explainability}, and social network category classification~\cite{zhang2018end,ying2018hierarchical}, etc. The success could be attributed to the non-linear modeling capability of GNNs, which extracts useful information from raw graph data and encodes them into the graph representation in a data-driven fashion.

\par The basic learning diagram of existing GNNs is to learn the parameters of GNNs from the training graphs, and then make predictions on unseen testing graphs. The most fundamental assumption to guarantee the success of such a learning diagram is the I.I.D. hypothesis, i.e., training and testing graphs are independently sampled from the identical distribution~\cite{liao2020pac}. However, in reality, such a hypothesis is hardly satisfied due to the uncontrollable generation mechanism of real data, such as data selection biases, confounder factors or other peculiarities~\cite{bengio2019meta,engstrom2019exploring,su2019one,hendrycks2019benchmarking}. The testing distribution may incur uncontrolled and unknown shifts from the training distribution, called Out-Of-Distribution (OOD) shifts~\cite{sun2019test,krueger2021out}, which makes most GNN models fail to make stable predictions. As reported by OGB benchmark~\cite{hu2020open}, GNN methods will occur a degeneration of 5.66\% to 20\% points when splitting the datasets according to the OOD settings, i.e., splitting structurally different graphs into training and testing sets. 

\par Essentially, for general machine learning models, when there incurs a distribution shift, the main reason for the degeneration of accuracy is the spurious correlation between the irrelevant features and the category labels. This kind of spurious correlations is intrinsically caused by the unexpected correlations between irrelevant features and relevant features for a given category~\cite{zhang2021deep,lake2017building,lopez2017discovering}. For graph-level tasks that we focus on in this paper, as the predicted properties of graphs are usually determined by the subgraph units (e.g., groups of atoms and chemical bonds representing functional units in a molecule)~\cite{ying2018hierarchical,ying2019gnnexplainer,jin2020hierarchical}, we define one subgraph unit could be one relevant feature or irrelevant feature with graph label. Taking the classification task of ``house'' motif as an example, as depicted in Figure~\ref{fig:fig1}, where the graph is labeled by whether the graph has a "house" motif and the first column represents the testing graphs. The GCN model is trained on the dataset where ``house'' motifs coexist with ``star'' motifs in most training graphs. With this dataset, the structural features of ``house'' motifs and ``star'' motifs would be strongly correlated. This unexpected correlation leads to spurious correlations between structural features of ``star'' motifs with the label ``house''. And the second column of Figure~\ref{fig:fig1} shows the visualization  of  the most important subgraph used by the GCN for prediction (shown with red color and generated by GNNExplainer~\cite{ying2019gnnexplainer}). As a result, the GCN model tends to use such spurious correlation, i.e., "star" motif, for prediction. When encountering graphs without ``star'' motif, or other motifs (e.g., ``diamond'' motifs) with ``star'' motifs, the model is prone to make false predictions (See Section~\ref{sec::syn}).


\par To improve the OOD generalization ability of GNNs, one important way is to make GNNs get rid of such subgraph-level spurious correlation. However, it is not a trivial task, which will face the two following challenges: (1) How to explicitly encode the subgraph-level information into graph representation? As the spurious correlation usually exists between subgraphs, it is necessary to encode such subgraph information into the graph representation, so that we can develop a decorrelation method based on the subgraph-level representation. (2) How to remove the spurious correlation among the subgraph-level representations? The nodes in the same type of subgraph units may exist correlation, for example, the information of `N' atom and `O' atom in `$\text{NO}_2$' groups of molecular graphs may be encoded into different dimensions of learned embedding, but they act as an integrated whole and such correlations are stable across unseen testing distributions. Hence, we should not remove the correlation between the interior variables of one kind of subgraphs and only need to remove the spurious correlation between subgraph-level variables.

\par To address these two challenges, we propose a novel causal representation framework for graph, called \textbf{StableGNN}, which takes advantage of both the flexible representation ability of GNNs and the distinguishing ability for spurious correlations of causal inference methods. In terms of the first challenge, we propose a graph high-level variable learning module that employs a graph pooling layer to map nearby nodes to a set of clusters, where each cluster will be one densely-connected subgraph unit of original graph. Moreover, we theoretically prove that the semantic meanings of clusters would be aligned across graphs by an ordered concatenation operation. Given the aligned high-level variables, to overcome the second challenge, we analyze the degeneration of GNNs in causal view and propose a novel causal variable distinguishing regularizer to decorrelate each high-level variable pair by learning a set of sample weights. These two modules are jointly optimized in our framework. Furthermore, as shown in Figure~\ref{fig:fig1}, StableGNN can effectively partial out the irrelevant subgraphs (i.e., ``star'' motif) and leverage truly relevant subgraphs (i.e., ``house'' motif) for predictions. 
\par In summary, the major contributions of the paper are as follows:
\begin{itemize}
\item To our best knowledge, we are one of the pioneer works studying the OOD generalization problem on GNNs for graph-level tasks, which is a key direction to apply GNNs to wild non-stationary environments.
\item We propose a general causal representation learning framework for GNNs, jointly learning the high-level representation with the causal variable distinguisher, which could learn an invariant relationship between the graph causal variables with the labels. And our framework is general to be adopted for various base GNN models to help them get rid of spurious correlations.
\item Comprehensive experiments are conducted on both synthetic datasets and real-world OOD graph datasets. The effectiveness, flexibility and interpretability of the proposed framework have been well-validated with convincing results.
\end{itemize}

\section{Related Work}
In this section, we discuss three main categories closely related to our work: graph neural networks, causal representation learning and stable learning.

\subsection{Graph Neural Networks}
Graph neural networks are powerful deep neural networks that could perform on graph data directly~\cite{scarselli2008graph,kipf2016semi,velivckovic2017graph,hamilton2017inductive,zhang2018end,lee2018graph}. GNNs have been applied to a wide variety of tasks, including node classification~\cite{kipf2016semi,velivckovic2017graph,hamilton2017inductive,bo2021beyond,chen2023universal}, link prediction~\cite{schlichtkrull2018modeling,zhang2018link,fan2019metapath}, graph clustering~\cite{pan2023beyond,fan2020one2multi,wang2017community}, and graph classification~\cite{zhang2018end,lee2018graph}. For graph classification, the task we majorly focus on here, the major challenge in applying GNNs is how to generate the representation of the entire graph. Common approaches to this problem are simply summing up or averaging all the node embedding in the final layer. Several literatures argue that such simple operation will greatly ignore high-level structure that might be presented in the graph, and then propose to learn the hierarchical structure of graph in an end-to-end manner~\cite{zhang2018end,ying2018hierarchical,lee2019self,gao2019graph}.  Although these methods have achieved remarkable results in I.I.D setting, most of them largely ignore the generalization ability in OOD setting, which is crucial for real applications. During the review process, we notice several works claiming the importance of OOD generalization on graph classification tasks~\cite{li2022ood,wu2022discovering,chen2022learning,li2022learning}. Unlike the framework of OOD-GNN~\cite{li2022ood}, which learns a  single embedding for each graph and decorrelates each dimension of embeddings, our framework emphasizes the need to learn meaningful high-level representations for each subgraph and decorrelate those representations. DIR~\cite{wu2022discovering} divides a graph into causal- and non-causal part by an edge threshold. However, the threshold is set as the same for all graphs and is hard to select a good threshold for all graphs. CIGA~\cite{chen2022learning} proposes to maximize the agreement between the invariant part of graphs with the same labels. For example, the functional groups $\text{NO}_2$ and $\text{NH}_2$ could both determine the mutagenicity of a molecule. However, subgraphs with the same labels may not always be identical. DisC~\cite{fan2022debiasing} studies how to learn causal substructure in severe bias scenarios.  Other methods~\cite{li2022learning, yang2022learning} are based on the environmental inference framework, in which these methods iteratively infer the environment labels of graphs and learn the invariant information based on the environment labels.

\subsection{Causal Representation Learning}
Recently, Schölkopf et al.~\cite{scholkopf2021toward} publish a survey paper, towards causal representation learning, and point out that causality, with its focus on representing structural knowledge about the data-generating process that allows interventions and changes, can contribute towards understanding and resolving some limitations of current machine learning methods. Traditional causal discovery and reasoning assume that the units are random variables connected by a causal graph. However, real-world observations are not structured into these units, for example, graph data we focus on is a kind of unstructured data. Thus for this kind of data, causal representation learning generally consists of two steps: (1) inferring the abstract/high-level causal variables from available low-level input features, and (2) leveraging the causal knowledge as an inductive bias to learn the causal structure of high-level variables. After learning the causal system, the causal relationship is robust to irrelevant interventions and changes in data distributions. Based on this idea, Invariant Risk Minimization (IRM) framework~\cite{arjovsky2019invariant,rosenfeld2020risks,kamath2021does} proposes a regularization that enforces model learn an invariant representation across environments. The representation learning part plays the role of extracting high-level representation from low-level raw features, and the regularization encodes the causal knowledge that causal representation should be invariant across environments into the representation learning. Despite the IRM framework could learn the representations that have better OOD generalization ability,  the requirement that domains should be labeled hinders the IRM methods from real applications. To overcome such a dilemma, some methods are proposed to implicitly learn domains from data~\cite{qiao2020learning,matsuura2020domain,wang2019learning}, but they implicitly assume the training data is formed by balanced sampling from latent domains. However, all these methods are mainly designed for tabular or image data, thus they cannot capture the intrinsic properties of graphs.

\subsection{Stable Learning}
To improve the feasibility of IRM-based methods, a series of researches on stable learning are proposed~\cite{kuang2018stable,kuang2020stable}. These methods mainly bring the ideas from the causal effect estimation~\cite{angrist1995identification} into machine learning models. Particularly, Shen et al.~\cite{shen2018causally} propose a global confounding balancing regularizer that helps the logistic regression model to identify causal features, whose causal effect on outcomes are stable across domains. To make the confounder balancing much easier in high-dimensional scenarios, Kuang et al.~\cite{kuang2018stable} utilize the autoencoder to encode the high-dimensional features into low-dimensional representation. Furthermore, \cite{kuang2020stable} and \cite{shen2020sample} demonstrate that decorrelating relevant and irrelevant features can make a linear model produce stable predictions under distribution shifts. Nevertheless, they are all developed based on the linear framework. Recently, Zhang et al.~\cite{zhang2021deep} extend decorrelation into a deep CNN model to tackle more complicated data types like images. This method pays much attention to eliminating the non-linear dependence among features, but the feature learning part is largely ignored. We argue that it is important to develop an effective high-level representation learning method to extract variables with appropriate granularity for the targeted task, so that the causal variable distinguishing part could develop based on meaningful causal variables.

\begin{figure}[t]
	\centering
	\includegraphics[width=4cm]{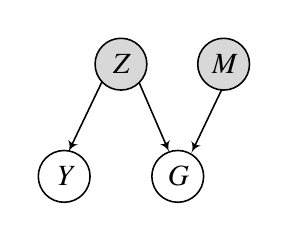}
	\caption{Causal graph for data generation process. Gray nodes and white nodes mean the unobserved latent variables and the observed variables, respectively.}
	\label{fig:fig2}
\end{figure}
\section{Problem Formulation}

\textbf{Notations.} In this paper, for any vector $\mathbf{v} = (\mathbf{v}_1, \mathbf{v}_2, \cdots, \mathbf{v}_n)$, let $\mathbf{v}_i$ the $i$-th element of $\mathbf{v}$. For any matrix $\mathbf{X}$, let $\mathbf{X}_{i,}$ and $\mathbf{X}_{,j}$ represent the $i$-th row and the $j$-th column in $\mathbf{X}$, respectively. $\mathbf{X}_{,j:k}$ denotes the submatrix of $\mathbf{X}$ from $j$-th column to $(k-1)$-th column. And for any italic uppercase letter, such as $X$, it will represent a random variable. We summarize the key notions used in the paper in Table~\ref{Tab:notations_2}.
\begin{table}[ht]
	\caption{Glossary of notations.}
	\centering
	\begin{tabular}{r|l}
		\hline
		\textbf{Notations} & \textbf{Description} \\ \hline
		$\mathcal{G}_{train}$/$\mathcal{G}_{test}$      & Training/Testing graphs         \\
		$\mathbf{A}$      &  Adjacency matrix         \\
		$\mathbf{F}$      & Feature matrix      \\
		$\mathbf{S}$      & Cluster assignment matrix      \\
		$\mathbf{Z}$      & Node representation matrix     \\
		$\mathbf{H}$      & High-level variable representation matrix     \\
		$\mathbf{P}$      & Permutation matrix      \\
		$T$      & Treatment variable           \\ 
		$Y$        & Label/prediction variable \\
		$X$ & Confounder variable          \\
		$X^{(p)}$ & The $p$-th high-level confounder variable          \\
		$\mathbf{X}^{(p)}$ & The $p$-th high-level confounder matrix          \\
		$\mathbf{w}$ & Sample/graph weights          \\
		\hline
	\end{tabular}
	\label{Tab:notations_2}
\end{table}
\label{Sec::Problem}
\begin{myPro}
\textbf{OOD Generalization Problem on Graphs.} Given the training graphs $\mathcal{G}_{train}=\{(G_1, Y_1), \cdots, (G_n, Y_n)\}$, where $G_i$ means the $i$-th graph data in the training set and $Y_i$ is the corresponding label, the task is to learn a GNN model $h_\theta(\cdot)$ with parameter $\theta$ to precisely predict the label of testing graphs $\mathcal{G}_{test}$, where the distribution $\Psi(\mathcal{G}_{train})\neq\Psi(\mathcal{G}_{test})$. And in the OOD setting, we do not know the distribution shifts from training graphs to unseen testing graphs.
\end{myPro}
We utilize a causal graph of the data generation process, as shown in Figure~\ref{fig:fig2}, to illustrate the fundamental reason to cause the distribution shifts on graphs. As illustrated in the figure, the observed graph data $G$ is generated by the unobserved latent cause $Z$ and $M$. $Z$ is a set of relevant variables and the label $Y$ is mainly determined by $Z$. $M$ is a set of irrelevant variables which does not decisive for label $Y$. During the unseen testing process in the real world, the variable $M$ could change, but the causal relationship P($Y$|$Z$) is invariant across environments.  Taking the classifying mutagenic property of a molecular graph~\cite{luo2020parameterized} as an example, $G$ is a molecular graph where the nodes are atoms and the edges are the chemical bonds between atoms, and $Y$ is the class label, e.g., whether the molecule is mutagenic or not. The whole molecular graph $G$ is an effect of relevant latent causes $Z$ such as the factors to generate nitrogen
dioxide ($\text{NO}_2$) group, which has a determinative effect on the mutagenicity of molecule, and the effect of irrelevant variable $M$, such as the carbon ring which exists more frequently in mutagenic molecules but not determinative~\cite{luo2020parameterized}. If we aim to learn a GNN model that is robust to unseen change on $M$, such as carbon exists in the non-mutagenic molecule, one possible way is to develop a representation function $f(\cdot)$ to recover $Z$ and $M$ from $G$, and learn a classifier $g(\cdot)$ based on $Z$, so that the invariant causal relationship $P(Y|Z)$ could be learned.

\section{The Proposed Method}
\subsection{Overview}
The basic idea of our framework is to design a causal representation learning method that could extract meaningful graph high-level variables and estimate their true causal effects for graph-level tasks. As depicted in Figure~\ref{fig:fig3}, the proposed framework mainly consists of two components: the graph high-level variable learning component and causal variable distinguishing component. The graph high-level variable learning component first employs a graph pooling layer that learns the node embedding and maps nearby nodes into a set of clusters. Then we get the cluster-level embeddings through aggregating the node embeddings in the same cluster, and align the cluster semantic space across graphs through an ordered concatenation operation. The cluster-level embeddings act as high-level variables for graphs. After obtaining the high-level variables, we develop a sample reweighting component based on Hilbert-Schmidt Independence Criterion (HSIC) measure to learn a set of sample weights that could remove the non-linear dependencies among multiple multi-dimensional embeddings. As the learned sample weights could generate a pseudo-distribution with less spurious correlation among cluster-level variables, we utilize the weights to reweight the GNN loss. Thus the GNN model trained on this less biased pseudo-data could estimate the causal effect of each high-level variable 
on the label more precisely, resulting in better generalization ability in wild environments.

\begin{figure*}[t]
	\centering
	\includegraphics[width=16cm]{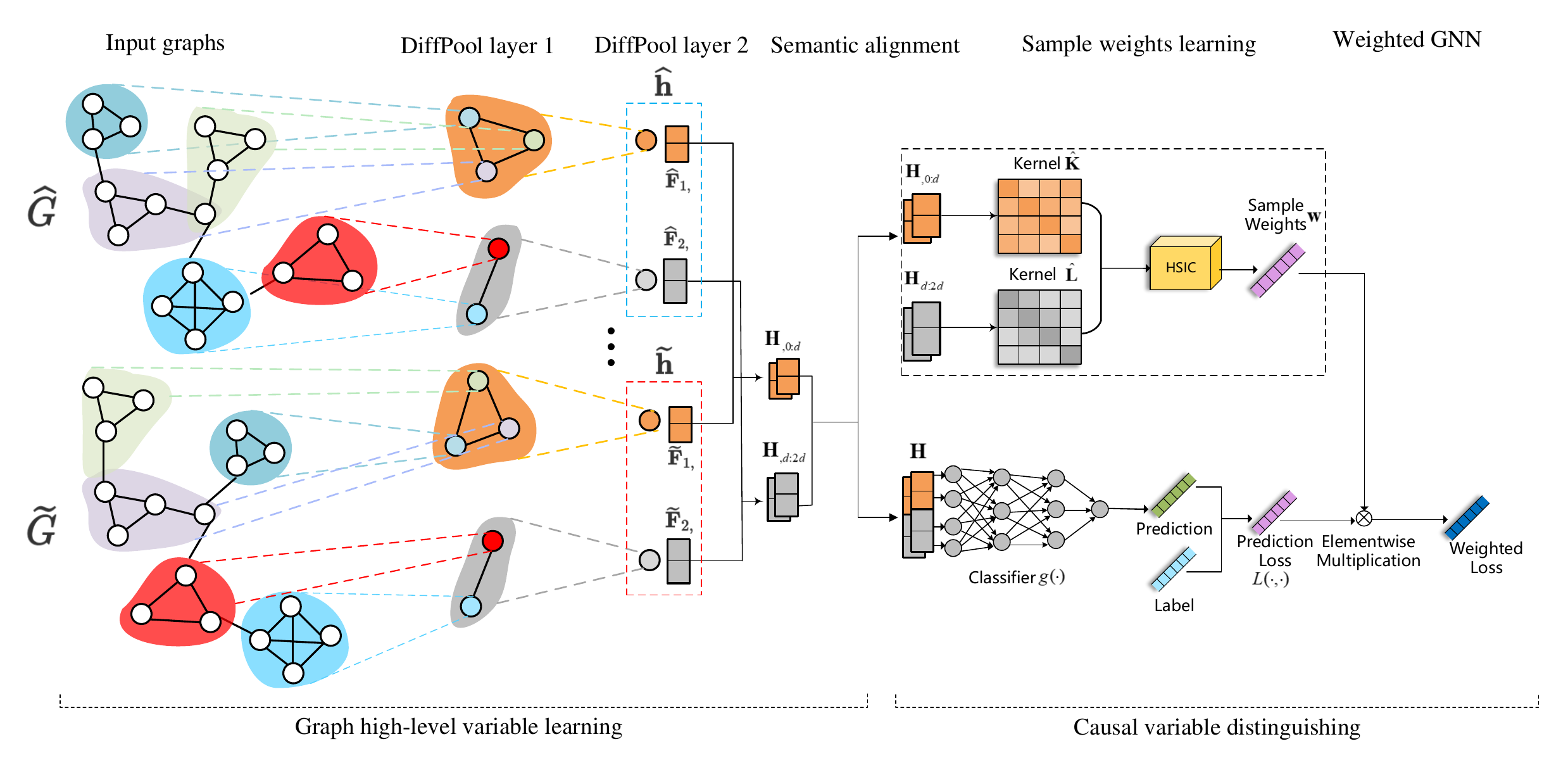}
	\caption{The overall architecture of the proposed StableGNN.}
	\label{fig:fig3}
\end{figure*}
\subsection{Graph High-level Variable Learning}
As the goal of this component is to learn a representation that could represent the original subgraph unit explicitly, we adopt a differentiable graph pooling layer that could map densely-connected subgraphs into clusters in an end-to-end manner. And then we theoretically prove that the semantic meaning of the learned high-level representations could be aligned by a simple ordered concatenation operation.

\paratitle{High-level Variable Pooling.} To learn node embedding as well as map densely connected subgraphs into clusters, we exploit the DiffPool layer~\cite{ying2018hierarchical} to achieve this goal by learning a cluster assignment matrix based on the learned embedding in each pooling layer. Particularly, as GNNs could smooth the node representations and make the representation more discriminative, given the input adjacency matrix $\mathbf{A}$, also denoted as $\mathbf{A}^{(0)}$ in this context, and the node features $\mathbf{F}$ of a graph, we firstly use an embedding GNN module to get the smoothed embeddings $\mathbf{F}^{(0)}$:
\begin{equation}
    \mathbf{F}^{(0)}=\text{GNN}^{(0)}_{\text{embed}}(\mathbf{A}^{(0)}, \mathbf{F}),
    \label{Eq:F0}
\end{equation}
where $\text{GNN}^{(0)}_{\text{embed}}(\cdot,\cdot)$ is a three-layer GNN module, and the GNN layer could be GCN~\cite{kipf2016semi}, GraphSAGE~\cite{hamilton2017inductive}, or others. Then we develop a pooling layer based on the smoothed representation $\mathbf{F}^{(0)}$. In particular, we first generate node representation at layer 1 as follows:
\begin{equation}
    \mathbf{Z}^{(1)}=\text{GNN}^{(1)}_{\text{embed}}(\mathbf{A}^{(0)}, \mathbf{F}^{(0)}).
    \label{Eq:Z}
\end{equation}
\par As nodes in the same subgraph would have similar node features and neighbor structure and GNN could map the nodes with similar features and structural information into similar representation, we also take the node embeddings $\mathbf{F}^{(0)}$ and adjacency matrix $\mathbf{A}^{(0)}$ into a pooling GNN module to generate a cluster assignment matrix at layer $1$, i.e., the clusters of original graph:
\begin{equation}
   \mathbf{S}^{(1)}=\text{softmax}(\text{GNN}^{(1)}_{\text{pool}}(\mathbf{A}^{(0)}, \mathbf{F}^{(0)})),
   \label{Eq:S}
\end{equation}
where $\mathbf{S}^{(1)}\in\mathbb{R}^{n_0\times n_{1}}$, and $n_0$ is the number of nodes of the input graph and $n_1$ is the number of the clusters at the layer 1, and $\mathbf{S}^{(1)}_{i,}$ represents the cluster assignment vector of $i$-th node, and $\mathbf{S}^{(1)}_{,j}$ corresponds to all nodes' assignment probabilities on $j$-th cluster at the layer $1$.  $\text{GNN}_{\text{pool}}^{(1)}(\cdot,\cdot)$ is also a three-layer GNN module, and the output dimension of $\text{GNN}_{\text{pool}}^{(1)}(\cdot,\cdot)$ is a pre-defined maximum number of clusters at layer $1$ and is a hyperparameter. And the appropriate number of clusters could be learned in an end-to-end manner. The maximum number of clusters in the last pooling layer should be the number of high-level variables we aim to extract. The softmax function is applied in a row-wise fashion to generate the cluster assignment probabilities for each node at layer $1$. 

\par After obtaining the assignment matrix $\mathbf{S}^{(1)}$, we could know the assignment probability of each node on the predefined clusters. Hence, based on the assignment matrix $\mathbf{S}^{(1)}$ and the learned node embedding matrix $\mathbf{Z}^{(1)}$, we could get a new coarsened graph, where the nodes are the clusters learned by this layer and edges are the connectivity strength between each pair of clusters. Particularly, the new matrix of embeddings is calculated by the following equation:
\begin{equation}
    \mathbf{F}^{(1)}={\mathbf{S}^{(1)}}^\mathrm{T}\mathbf{Z}^{(1)}\in\mathbb{R}^{n_{1}\times d},
    \label{Eq:F}
\end{equation}
where $d$ is the dimension of the embedding. This equation aggregates the node embedding $\mathbf{Z}^{(1)}$ according to the cluster assignment $\mathbf{S}^{(1)}$, generating embeddings for each of the $n_{1}$ clusters. Similarly, we generate a coarsened adjacency matrix as follows.
\begin{equation}
    \mathbf{A}^{(1)}={\mathbf{S}^{(1)}}^\mathrm{T}\mathbf{A}^{(0)}\mathbf{S}^{(1)}\in\mathbb{R}^{n_{1}\times n_{1}}.
    \label{Eq:A}
\end{equation}

For all the operations with superscript (1), it is one DiffPool unit and it is denoted as  $(\mathbf{A}^{(1)}, \mathbf{F}^{(1})=\text{DiffPool}(\mathbf{A}^{(0)}, \mathbf{F}^{(0)})$. For any DiffPool layer $l$, it could be denoted as $(\mathbf{A}^{(l)}, \mathbf{F}^{(l)})=\text{DiffPool}(\mathbf{A}^{(l-1)}, \mathbf{F}^{(l-1)})$. In particular, we could stack multiple DiffPool layers to extract the deep hierarchical structure of the graph.

\paratitle{High-level Representation Alignment.} After stacking $L$ graph pooling layers, we could get the most high-level cluster embedding $\mathbf{F}^{(L)}\in\mathbb{R}^{{n_L}\times d}$, where $\mathbf{F}^{(L)}_{i,}$ represents the $i$-th high-level representation of the corresponding subgraph in the original graph and $n_L$ is the number of high-level representation. As our target is to encode subgraph information into graph representation and $\mathbf{F}^{(L)}$ has explicitly encoded each densely-connected subgraph information into each row of $\mathbf{F}^{(L)}$, we propose to utilize the embedding matrix $\mathbf{F}^{(L)}$ to represent the high-level variables. However, due to the Non-Euclidean property of graph data, for the $i$-th learned high-level representations $\widehat{\mathbf{F}}^{(L)}_{i,}$ and $\widetilde{\mathbf{F}}^{(L)}_{i,}$ from two graphs $\widehat{G}$ and $\widetilde{G}$, respectively, their semantic meaning may not be matched, e.g., $\widehat{\mathbf{F}}^{(L)}_{i,}$ and $\widetilde{\mathbf{F}}^{(L)}_{i,}$ may represent scaffold substructure (e.g., carbon ring) and functional group (e.g., $\text{NO}_2$) in two molecular graphs, respectively. To match the semantic meaning of learned high-level representation across graphs, we propose to concatenate the high-level variables by the order of row index of high-level embedding matrix for each graph:
\begin{equation}
    \mathbf{h} = \text{concat}(\mathbf{F}^{(L)}_{1,},\mathbf{F}^{(L)}_{2,},\cdots,\mathbf{F}^{(L)}_{n_{L},}),
\end{equation}
where $\mathbf{h}\in\mathbb{R}^{n_L d}$ and $\text{concat}(\cdot)$ means concatenation operation by the row axis. Moreover, we stack $m$ high-level representations for a mini-batch with $m$ graphs to obtain the embedding matrix $\mathbf{H}\in\mathbb{R}^{m\times n_L d}$:
\begin{equation}
    \mathbf{H} = \text{stack}(\mathbf{h}_1,\mathbf{h}_{2},\cdots,\mathbf{h}_{m}),
\end{equation}
where $\mathbf{h}_i$ is the concatenated high-level representation of sample $i$, $\text{stack}(\cdot)$ means the stacking operation by the row axis, and $\mathbf{H}_{i,}$ means the high-level representation for the $i$-th graph in the mini-batch. $\mathbf{H}_{i,(k-1)d:kd}$ means $k$-th high-level representation of $i$-th graph. Hence, to prove the semantic alignment of any two high-level representations $\mathbf{H}_{i,}$ and $\mathbf{H}_{j,}$, we need to demonstrate that the semantic meanings of $\mathbf{H}_{i,(k-1)d:kd}$ and $\mathbf{H}_{j,(k-1)d:kd}$ are aligned for all $k \in [1,n_L]$. To this end, we first prove the permutation invariant property of DiffPool layer.
\begin{myLemma}
\textbf{Permutation Invariance~\cite{ying2018hierarchical}.} Given any permutation matrix $\mathbf{P}\in\{0,1\}^{n\times n}$, if $\mathbf{P}\cdot\text{GNN}(\mathbf{A},\mathbf{F})=\text{GNN}(\mathbf{P}\mathbf{A}\mathbf{P}^\mathrm{T}, \mathbf{P}\mathbf{F})$ (i.e., the GNN method used is permutation equivariant), then $\text{DiffPool}(\mathbf{A}, \mathbf{F})=\text{DiffPool}(\mathbf{P}\mathbf{A}\mathbf{P}^\mathrm{T}, \mathbf{P}\mathbf{F})$.
\end{myLemma}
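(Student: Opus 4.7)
The plan is to unfold the definition of $\text{DiffPool}$ from Equations~\eqref{Eq:Z}--\eqref{Eq:A}, substitute the permuted inputs $(\mathbf{P}\mathbf{A}\mathbf{P}^\mathrm{T},\mathbf{P}\mathbf{F})$, and repeatedly apply the permutation equivariance hypothesis together with the orthogonality identity $\mathbf{P}^\mathrm{T}\mathbf{P}=\mathbf{I}$ to cancel out all occurrences of $\mathbf{P}$ in the outputs $\mathbf{F}^{(1)}$ and $\mathbf{A}^{(1)}$. Concretely, I would denote the DiffPool intermediates under the permuted inputs by primed symbols and show in succession that (i) $\mathbf{Z}^{(1)\prime}=\mathbf{P}\mathbf{Z}^{(1)}$, (ii) $\mathbf{S}^{(1)\prime}=\mathbf{P}\mathbf{S}^{(1)}$, and then (iii) both $\mathbf{F}^{(1)\prime}=\mathbf{F}^{(1)}$ and $\mathbf{A}^{(1)\prime}=\mathbf{A}^{(1)}$.

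First I would handle the two embedding computations. By the assumed equivariance, $\text{GNN}^{(1)}_{\text{embed}}(\mathbf{P}\mathbf{A}\mathbf{P}^\mathrm{T},\mathbf{P}\mathbf{F}^{(0)})=\mathbf{P}\,\text{GNN}^{(1)}_{\text{embed}}(\mathbf{A},\mathbf{F}^{(0)})=\mathbf{P}\mathbf{Z}^{(1)}$, noting that $\mathbf{F}^{(0)}$ itself is produced by an equivariant GNN so its permuted version is $\mathbf{P}\mathbf{F}^{(0)}$. The analogous computation for $\text{GNN}^{(1)}_{\text{pool}}$ yields $\mathbf{P}\,\text{GNN}^{(1)}_{\text{pool}}(\mathbf{A},\mathbf{F}^{(0)})$. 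The only subtlety at this point is the softmax in \eqref{Eq:S}: since it is applied row-wise and left-multiplication by $\mathbf{P}$ merely reorders rows, softmax commutes with $\mathbf{P}$, giving $\mathbf{S}^{(1)\prime}=\mathbf{P}\mathbf{S}^{(1)}$.

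Next I would plug these into the aggregation steps. For the coarsened features,
\begin{equation}
\mathbf{F}^{(1)\prime}=(\mathbf{P}\mathbf{S}^{(1)})^{\mathrm{T}}(\mathbf{P}\mathbf{Z}^{(1)})={\mathbf{S}^{(1)}}^{\mathrm{T}}\mathbf{P}^{\mathrm{T}}\mathbf{P}\mathbf{Z}^{(1)}={\mathbf{S}^{(1)}}^{\mathrm{T}}\mathbf{Z}^{(1)}=\mathbf{F}^{(1)},
\end{equation}
and for the coarsened adjacency (reading $\mathbf{A}^{(1)}$ in \eqref{Eq:A} as ${\mathbf{S}^{(1)}}^{\mathrm{T}}\mathbf{A}^{(0)}\mathbf{S}^{(1)}$, since the right-hand side must act on the previous-layer adjacency),
\begin{equation}
\mathbf{A}^{(1)\prime}=(\mathbf{P}\mathbf{S}^{(1)})^{\mathrm{T}}(\mathbf{P}\mathbf{A}^{(0)}\mathbf{P}^{\mathrm{T}})(\mathbf{P}\mathbf{S}^{(1)})={\mathbf{S}^{(1)}}^{\mathrm{T}}\mathbf{A}^{(0)}\mathbf{S}^{(1)}=\mathbf{A}^{(1)},
\end{equation}
where both cancellations use $\mathbf{P}^{\mathrm{T}}\mathbf{P}=\mathbf{I}$. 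Combining the two equalities gives $\text{DiffPool}(\mathbf{P}\mathbf{A}\mathbf{P}^{\mathrm{T}},\mathbf{P}\mathbf{F})=\text{DiffPool}(\mathbf{A},\mathbf{F})$.

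The main obstacle, if there is one, is bookkeeping rather than mathematics: one must be careful that every quantity appearing under the permuted inputs truly reduces to its unpermuted counterpart (in particular, that $\mathbf{F}^{(0)}$ is itself equivariant under the hypothesis, and that the row-wise softmax does not introduce an asymmetry). Once those commutation properties are in hand, the proof is essentially two lines of linear algebra driven by $\mathbf{P}^{\mathrm{T}}\mathbf{P}=\mathbf{I}$.
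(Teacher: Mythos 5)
Your proposal is correct and follows essentially the same route as the paper's proof: establish that the embedding and assignment outputs are permutation equivariant (so they pick up a left factor of $\mathbf{P}$), then use $\mathbf{P}^\mathrm{T}\mathbf{P}=\mathbf{I}$ to cancel all permutation matrices in the coarsened feature and adjacency computations. Your version is actually more careful than the paper's — you explicitly track the intermediates, note that the row-wise softmax commutes with row permutation, and correctly read the adjacency update as acting on $\mathbf{A}^{(0)}$ (the paper's Eq.~(\ref{Eq:A}) contains a typo writing $\mathbf{A}^{(1)}$ on the right-hand side) — but these are refinements of the same argument, not a different one.
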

\begin{proof}
Following~\cite{keriven2019universal}, a function $f:\mathbb{R}^{n\times d}\rightarrow\mathbb{R}^{n\times l}$: is  \textit{invariant} if $f(\textbf{F}, \textbf{X})=f(\textbf{PFP}^\mathrm{T}, \textbf{PF})$, i.e, the permutation will not change node representation and the node order in the learned matrix. A function $f:\mathbb{R}^{n\times d}\rightarrow\mathbb{R}^{n\times l}$: is \textit{equivariant} if $f(\textbf{A}, \textbf{F})=\textbf{P}\cdot f(\textbf{PAP}^\mathrm{T}, \textbf{PF})$, i.e., the permutation will not change the node representation but the order of nodes in matrix will be permuted. The permutation invariant aggregator functions such as mean pooling and max pooling ensure that the basic model can be trained and applied to arbitrarily ordered node neighborhood feature sets, i.e, permutation equivariant. Therefore, the Eq.~(\ref{Eq:Z}) and ~(\ref{Eq:S}) are the permutation equivariant by the assumption that the GNN module is permutation equivariant. And since any permutation matrix is orthogonal, i.e., $\mathbf{P}^\mathrm{T}\mathbf{P}=\mathbf{I}$, applying this into Eq.~(\ref{Eq:F}) and~(\ref{Eq:A}), we have:
\begin{equation}
    \mathbf{F}^{(1)}={\mathbf{S}^{(1)}}^\mathrm{T}\mathbf{P}^\mathrm{T}\mathbf{P}\mathbf{Z}^{(1)},
    \label{Eq:F1}
\end{equation}
\begin{equation}
    \mathbf{A}^{(1)}={\mathbf{S}^{(1)}}^\mathrm{T}\mathbf{P}^\mathrm{T}\mathbf{P}\mathbf{A}^{(1)}\mathbf{P}\mathbf{P}^\mathrm{T}\mathbf{S}^{(1)}.
    \label{Eq:A1}
\end{equation}
Hence, DiffPool layer is permutation invariant.
\end{proof}
After proving the permutation invariant property of a single graph, we then illustrate the theoretical results for the semantic alignment of learned high-level representation across graphs.
\begin{myTheo}
\textbf{Semantic Alignment of High-level Variables.} Given any two graphs $G_i=(\mathbf{A}_i, \mathbf{F}_i)$ and $G_j=(\mathbf{A}_j, \mathbf{F}_j)$, the semantic space of high-level representations $\mathbf{H}_{i,}$ and $\mathbf{H}_{j,}$ learned by a series of shared DiffPool layers is aligned.
\end{myTheo}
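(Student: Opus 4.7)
The plan is to combine the permutation-invariance lemma proved just above with the fact that a single set of DiffPool parameters is shared across every input graph. I would reduce the statement to two self-contained sub-claims: (a) for each individual graph the row index $k$ of $\mathbf{F}^{(L)}$ is independent of any arbitrary labelling of the nodes, and (b) the role played by row $k$ is the same function of the input graph, whichever graph is fed in.

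For (a), I would apply the previous lemma inductively over the $L$ pooling layers. The lemma gives $\text{DiffPool}(\mathbf{A},\mathbf{F}) = \text{DiffPool}(\mathbf{P}\mathbf{A}\mathbf{P}^\mathrm{T},\mathbf{P}\mathbf{F})$; iterating, the final pair $(\mathbf{A}^{(L)},\mathbf{F}^{(L)})$ is an invariant of the isomorphism class of $(\mathbf{A},\mathbf{F})$. Consequently, the row $\mathbf{F}^{(L)}_{k,}$ is a well-defined object attached to the graph itself and not to any ordering chosen for its nodes; in particular the same holds for $G_i$ and for $G_j$ separately.

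For (b), I would trace through Equations \ref{Eq:F0}--\ref{Eq:A} to exhibit row $k$ of $\mathbf{F}^{(L)}$ as the output of a fixed, graph-independent computation. At the final layer $\mathbf{F}^{(L)}_{k,} = \sum_{v} \mathbf{S}^{(L)}_{v,k}\,\mathbf{Z}^{(L)}_{v,}$, and $\mathbf{S}^{(L)}_{,k}$ is the softmax-normalised activation along the fixed $k$-th output coordinate of the shared network $\text{GNN}^{(L)}_{\text{pool}}$. Propagating recursively through earlier layers, row $k$ is the image of the input graph under one particular composition of shared embedding and pooling maps. Applied to $G_i$ and to $G_j$, this composition is literally the same function, so its output occupies the same ``slot'' in each graph's representation. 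Combined with the definition $\mathbf{h} = \text{concat}(\mathbf{F}^{(L)}_{1,},\dots,\mathbf{F}^{(L)}_{n_L,})$, this yields the alignment claim: $\mathbf{H}_{i,(k-1)d:kd}$ and $\mathbf{H}_{j,(k-1)d:kd}$ are the values of the very same cluster-detector evaluated on $G_i$ and $G_j$.

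I expect the main obstacle to be that ``semantic alignment'' is not given a formal definition in the excerpt. I would therefore open the proof by fixing the operational interpretation above --- that two embedding slots are aligned when they are computed by the same graph-level function of the input --- and then verify that interpretation. Apart from this interpretive step, the argument is essentially bookkeeping: the lemma kills intra-graph ambiguity coming from node relabelling, and weight sharing across graphs kills the inter-graph ambiguity of which channel represents which concept. No analytic estimates or extra assumptions beyond the permutation-equivariance hypothesis already used in the lemma should be needed.
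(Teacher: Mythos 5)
Your proof is correct in substance and shares the paper's two-part skeleton: the permutation-invariance lemma removes intra-graph ambiguity from node ordering, and a cross-graph consistency argument removes ambiguity about which channel carries which concept. Where you differ is in how you justify the cross-graph step. The paper argues via a WL-test analogy: the shared GNN modules are a continuous analogue of WL color refinement, so the embeddings of any two graphs at the same layer live in a common ``color space''; similar substructures therefore receive similar colors and are routed by $\text{GNN}_{\text{pool}}$ to the same column of the assignment matrix, which is then propagated to the same row of $\mathbf{F}^{(L)}$ by Eq.~(\ref{Eq:F1}). You instead argue purely from weight sharing --- row $k$ is the output of one fixed composition of shared maps, hence the same ``cluster detector'' evaluated on $G_i$ and $G_j$ --- and you make this the \emph{definition} of alignment. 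Your route is cleaner and more honest about the missing formalization, but it proves a slightly more tautological statement (same function, same slot) than the paper intends: the WL argument, informal as it is, is trying to say that the $k$-th slot captures the same \emph{substructure} (e.g., a carbon ring) across graphs because similar local structures are mapped to nearby points in embedding space before the softmax assignment. Neither argument is fully rigorous on that semantic point, but you should at least note that weight sharing alone does not guarantee the $k$-th detector responds to a coherent structural concept --- that additional claim is what the paper's appeal to WL colors is meant to supply.
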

\begin{proof}
WL-test~\cite{leman1968reduction} is widely used in graph isomorphism checking: if two graphs are isomorphic, they will have the same multiset of WL colors at any iteration. GNNs could be viewed as a continuous approximation to the WL test, where they both aggregate signal from neighbors and the trainable neural network aggregators is an analogy to the hash function in WL-test~\cite{zhang2018end}. Therefore, the outputs of GNN module are exactly the continuous WL colors. The outputs of GNN layers or the WL color of nodes would represent their structural roles~\cite{zhang2018end}. For example, the output assignment matrix of the pooling GNN module in Diffpool layer $\mathbf{S}^{(1)}\in \mathbb{R}^{n_0\times n_1}$ could be interpreted as the structural roles of $n_0$ input nodes with respect to the $n_1$ clusters of layer 1. Through the assignment matrix $\mathbf{S}^{(1)}$, it will aggregate the nodes with similar structural roles into the clusters with the same index. Hence, the semantic meaning of each column of the learned assignment matrices $\mathbf{S}_i^{(l)}$ and $\mathbf{S}_j^{(l)}$ of graphs $G_i$ and $G_j$ would be aligned, e.g., the $k$-th column would represent carbon ring structure in all molecule graphs, which act as the scaffold structural role in molecule graphs. Due to the permutation invariant property of DiffPool layer, according to Eq. (\ref{Eq:F1}), the input graph with any node order will map its carbon ring structure signal into the $k$-th high-level representation, i.e., the $k$-th high-level representation of $\mathbf{H}_{i, (k-1)d:kd}$ and $\mathbf{H}_{j, (k-1)d:kd}$. Hence, the semantic meaning of $\mathbf{H}_{i,.}$ and $\mathbf{H}_{j,.}$ is aligned.
\end{proof} 

\subsection{Causal Variable Distinguishing Regularizer}
So far the variable learning part extracts all the high-level representations for all the densely-connected subgraphs regardless of whether it is relevant with the graph label due to causal relation or spurious correlation. In this section, we first analyze the reason leading to the degeneration of GNNs on OOD scenarios in a causal view and then propose the Causal Variable Distinguishing (CVD) regularizer with sample reweighting technique.

\begin{figure}[htbp]
\centering
\subfigure[Confounder balancing framework.]{
\includegraphics[width=4cm]{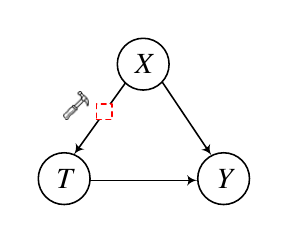}
\label{fig:OurCBa}
}
\subfigure[Multiple multi-dimensional confounder balancing framework.]{
\includegraphics[width=4cm]{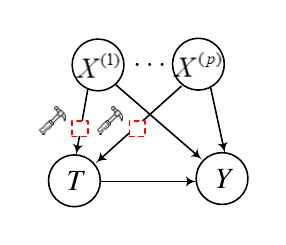}
\label{fig:OurCBb}
}
\caption{Causal view on GNNs.}
\label{fig:causal view}
\end{figure}

\paratitle{Revisiting on GNNs in Causal View.} As described in Section~\ref{Sec::Problem}, our target is to learn a classifier $g(\cdot)$ based on the relevant variable $Z$. To this end, we need to distinguish which variable of learned high-level representation $\mathbf{H}$ belongs to $Z$ or $M$. The major difference between $Z$ and $M$ is whether they have a causal effect on $Y$. For a graph-level classification task, after learning the graph representation, it will be fed into a classifier to predict its label. The prediction process could be represented by the causal graph with three variables and their relationships, as shown in Figure~\ref{fig:OurCBa}, where $T$ is a treatment variable, $Y$ is the prediction/outcome variable\footnote{We use variable $Y$ for both the ground-truth labels and prediction, as they are optimized to be the same.}, and $X$ is the confounder variable, which has effects both on the treatment variable and the outcome variable. The path $T\rightarrow Y$ represents the target of GNNs that aims to estimate the causal effect of the one learned variable $T$ (e.g., $i$-th high-level representation $\mathbf{H}_{, (i-1)d:id}$ ) on $Y$. Meanwhile, other learned variables (e.g., $j$-th high-level representation $\mathbf{H}_{, (j-1)d:jd}$) will act as confounder $X$. Due to the existence of spurious correlations of subgraphs, there are spurious correlations between their learned embeddings. Hence, there incurs a path between $X$ and $T$. \footnote{Note that the direction of arrow means that the assignment of treatment value will dependent on the confounder. However, if the arrow is reversed, it will not affect the theoretical results in our paper.}  And because GNNs also employ the confounder (e.g., the representation of carbon ring) for prediction, there exists a path $X\rightarrow Y$. Hence, these two paths form a backdoor path between $X$ and $Y$ (i.e., $T\leftarrow X\rightarrow Y$) and induce the spurious correlation between $T$ and $Y$. And the spurious correlation will amplify the true correlation between treatment variable and label, and may change in testing environments. Under this scenario, existing GNNs cannot estimate the causal effect of subgraphs accurately, so the performance of GNNs will degenerate when the spurious correlation change in the testing phase. Confounding balancing techniques~\cite{hainmueller2012entropy, zubizarreta2015stable} correct the non-random assignment of treatment variable by balancing the distributions of confounder across different treatment levels. Because moments can uniquely determine a distribution, confounder balancing methods directly balance the confounder moments by adjusting weights of samples~\cite{athey2016approximate,hainmueller2012entropy}. The sample weights $\mathbf{w}$ for binary treatment scenario are learnt by:
\begin{equation}
    \mathbf{w}=\mathop{\arg\min}_{\mathbf{w}}||\frac{\sum_{i:T_i=1} \mathbf{w}_i\cdot \mathbf{X}_{i,}}{\sum_{i:T_i=1}\mathbf{w}_i}-\frac{\sum_{j:T_j=0} \mathbf{w}_j\cdot \mathbf{X}_{j,}}{\sum_{j:T_j=0} \mathbf{w}_j}||_2^2,
\end{equation}
where $\mathbf{X}$ is the confounder matrix and $\mathbf{X}_{i,}$ is the confounder vector for $i$-th graph. Given a binary treatment feature $T$, $\frac{\sum_{i:T_i=1} \mathbf{w}_i\cdot \mathbf{X}_{i,}}{\sum_{i:T_i=1}\mathbf{w}_i}$ and $\frac{\sum_{j:T_j=0} \mathbf{w}_j\cdot \mathbf{X}_{j,}}{\sum_{j:T_j=0} \mathbf{w}_j}$ refer to the mean value of confounders on samples with and without treatment,
respectively. After confounder balancing, the dependence between $X$ and $T$ (i.e., $T\leftarrow X$) would be eliminated, thus the correlation between the treatment variable and the output variable will represent the causal effect (i.e., $X\rightarrow Y$).

\par Moreover, for a GNN model, we have little prior knowledge on causal relationships between the learned high-level variables $\{\mathbf{H}_{,0:d},\cdots, \mathbf{H}_{,(n_L-1)d:n_Ld}\}$, thus we have to set each learned high-level variable as treatment variable one by one, and the remaining high-level variables are viewed as confounding variables, e.g, $\mathbf{H}_{,0:d}$ is set as treatment variable and $\{\mathbf{H}_{,d:2d},\cdots, \mathbf{H}_{,(n_L-1)d:n_Ld}\}$ are set as confounders.  Note that for a particular treatment variable, previous confounder balancing techniques are mainly designed for a single-dimensional treatment feature as well as the confounder usually consists of multiple variables where each variable is a single-dimensional feature. In our scenario, however, as depicted in Figure~\ref{fig:OurCBb}, we should deal with the confounders which are composed of multiple multi-dimensional confounder variables $\{X^{(1)}, \cdots, X^{(p)}\}$, where each multi-dimensional confounder variable corresponds to one of learned high-level representations  $\{\mathbf{H}_{,0:d},\cdots, \mathbf{H}_{,(n_L-1)d:n_Ld}\}$. The multi-dimensional variable unit usually has integrated meaning such as representing one subgraph, so it is unnecessary to remove the correlation between the treatment variable with each of feature $\mathbf{H}_{,i}$ in one multi-dimensional feature unit, e.g., $\mathbf{H}_{,0:d}$. And we should only remove the subgraph-level correlation between treatment variable $T$ with multiple multi-dimensional variable units $\{X^{(1)}, \cdots, X^{(p)}\}$. One possible way to achieve this goal is to learn a set of sample weights that balance the distributions of all confounding variables for the targeted treatment variable, as illustrated in Figure~\ref{fig:OurCBb}, i.e., randomizing the assignment of treatment $T$ \footnote{Here, we still assume the treatment is a binary variable. In the following part, we will consider the treatment as a multi-dimensional variable.}  with confounders $\{X^{(1)}, \cdots, X^{(p)}\}$. The sample weights $\mathbf{w}$ could be learnt by the following \textit{multiple multi-dimensional confounder balancing} objective:
\begin{equation}
\mathbf{w}=\mathop{\arg\min}_{\mathbf{w}}\sum_{k=1}^p||\frac{\sum_{i:T_i=1} \mathbf{w}_i\cdot \mathbf{X}_{i,}^{(k)}}{\sum_{i:T_i=1} \mathbf{w}_i}-\frac{{\sum_{j:T_j=0} \mathbf{w}_j\cdot \mathbf{X}_{j,}^{(k)}}}{\sum_{j:T_j=0} \mathbf{w}_j}||_2^2,
\end{equation}
where $\mathbf{X}^{(k)}$ is the embedding matrix for $k$-th confounding variable $X^{(k)}$.

\paratitle{Weighted Hilbert-Schmidt Independence Criterion.} Since the above confounder balancing method is mainly designed for binary treatment variable, which needs the treatment value to divide the samples into treated or control group, it is hard to apply to the continuous multi-dimensional variables learned by GNNs. Inspired by the intuition of confounder balancing which is to remove the dependence between the treatment with the corresponding confounding variables, we propose to remove dependence between continuous multi-dimensional random variable $T$ with each of confounder in $\{X^{(1)}, \cdots, X^{(p)}\}$. Moreover, as the relationship between representations learned by the representation module is highly non-linear, it is necessary to measure the nonlinear dependence between them. And it is feasible to resort to HSIC measure~\cite{song2007supervised}. For two random variables $U$ and $V$ and kernel $k$ and $l$, HSIC is defined as $\text{HSIC}^{k,l}(U,V):=||C^{k,l}_{UV}||^2_\text{HS}$, where $C^{k,l}$ is a cross-covariance operator in the Reproducing Kernel Hilbert Spaces (RKHS) of $k$ and $l$~\cite{gretton2005measuring}, an RKHS analogue of covariance matrices. $||\cdot||_\text{HS}$ is the Hilbert-Schmidt norm, a Hilbert-space analogue of the Frobenius norm. For two random variables $U$ and $V$ and radial basis function (RBF) kernels $k$ and $l$, $\text{HSIC}^{k,l}(U,V)=0$ if and only if $U\perp V$. To estimate $\text{HSIC}^{k,l}(U,V)$ with finite sample, we employ a widely used estimator $\text{HSIC}^{k,l}_0(U,V)$~\cite{gretton2005measuring}  with $m$ samples, defined as:
\begin{equation}
    \text{HSIC}^{k,l}_0(U,V)=(m-1)^{-2}tr(\mathbf{KPLP}),
    \label{Eq:HSIC_0}
\end{equation}
where $\mathbf{K},\mathbf{L}\in\mathbb{R}^{m\times m}$ are RBF kernel matrices containing entities $\mathbf{K}_{ij}=k(U_{i}, U_{j})$ and $\mathbf{L}_{ij}=l(V_{i}, V_{j})$. $\mathbf{P}=\mathbf{I}-m^{-1}\mathbf{1}\mathbf{1}^\mathrm{T}\in\mathbb{R}^{m\times m}$ is a centering matrix, where $\mathbf{I}$ is an identity matrix and $\mathbf{1}$ is an all-one column vector. $\mathbf{P}$ is used to center the RBF kernel matrices to have zero mean in the feature space.

To eliminate the dependence between the high-level treatment representation with the corresponding confounders, sample reweighting techniques could generate a pseudo-distribution that has less dependence between variables~\cite{zhang2021deep,zou2020counterfactual}.  We propose a sample reweighting method to eliminate the dependence between high-level variables, where the non-linear dependence is measured by HSIC.
\par We use $\mathbf{w}\in \mathbb{R}^m_{+}$ to denote a set of sample weights. For any two random variables $U$ and $V$, we first utilize the random initialized weights to reweight these two variables:
\begin{equation}
    \hat{U}=(\mathbf{w}\cdot\mathbf{1}^\mathrm{T})\odot U,
\end{equation}
\begin{equation}
    \hat{V}=(\mathbf{w}\cdot\mathbf{1}^\mathrm{T})\odot V,
\end{equation}
where `$\odot$' is the Hadamard product. Substituting $\hat{U}$ and $\hat{V}$ into Eq.~(\ref{Eq:HSIC_0}), we obtain the weighted HSIC value:
\begin{equation}
    \hat{\text{HSIC}}^{k,l}_0(U,V,\mathbf{w})=(m-1)^{-2}tr(\mathbf{\hat{K}P\hat{L}P}),
\end{equation}
where $\hat{\mathbf{K}},\hat{\mathbf{L}}\in\mathbb{R}^{m\times m}$ are weighted RBF kernel matrices containing entries $\hat{\mathbf{K}}_{ij}=k(\hat{U}_{i}, \hat{U}_{j})$ and $\hat{\mathbf{L}}_{ij}=l(\hat{V}_{i}, \hat{V}_{j})$. Specifically, for treatment variable $\mathbf{H}_{,0:d}$ and its corresponding multiple confounder variables $\{\mathbf{H}_{,d:2d},\cdots,\mathbf{H}_{,(n_L-1)d:n_Ld}\}$, we share the sample weights $\mathbf{w}$ across multiple confounders and propose to optimize $\mathbf{w}$ by:
\begin{equation}
    \mathbf{w}^*=\mathop{\arg\min}_{\mathbf{w}\in\Delta_m}\sum_{1<p<n_L}\hat{\text{HSIC}}^{k,l}_0(\mathbf{H}_{,0:d},\mathbf{H}_{,(p-1)d:pd},\mathbf{w}),
\end{equation}
 where $\Delta_m=\{\mathbf{w}\in\mathbb{R}^m_{+}|\sum_{i=1}^m \mathbf{w}_i=m\}$ is used to control the overall loss of each batch almost unchange, and we utilize $\mathbf{w}=\text{softmax}(\mathbf{w})$ to satisfy this constraint. Hence, reweighting training samples with the optimal $\mathbf{w}^*$ can mitigate the dependence between high-level treatment variable with confounders to the greatest extent.

\paratitle{Global Multi-dimensional Variable Decorrelation.} Note that the above method is to remove the correlation between a single treatment variable $\mathbf{H}_{,0:d}$ with the confounders $\{\mathbf{H}_{,d:2d},\cdots,\mathbf{H}_{,(n_L-1)d:n_Ld}\}$. However, we need to estimate the causal effect of  all the learned high-level representations $\{\mathbf{H}_{,0:d},\mathbf{H}_{,d:2d},\cdots,\mathbf{H}_{,(n_L-1)d:n_Ld}\}$. As mentioned above, we need to set each high-level representation as a treatment variable and the remaining high-level representations as confounders, and remove the dependence between each treatment variable with the corresponding confounders. One effective way to achieve this goal is to remove all the dependence between variables. Specifically, we learn a set of sample weights that globally remove the dependence between each pair of high-level representations, defined as follows:
\begin{equation}
    \mathbf{w}^*=\mathop{\arg\min}_{\mathbf{w}\in\Delta_m}\sum_{1\leq i<j\leq n_L}\hat{\text{HSIC}}^{k,l}_0(\mathbf{H}_{,(i-1)d:id},\mathbf{H}_{,(j-1)d:jd},\mathbf{w}).
    \label{Eq:global}
\end{equation}
\par As we can see from Eq.~(\ref{Eq:global}), the global sample weights $\mathbf{w}$ simultaneously reduce the dependence among all high-level representations.

\subsection{Weighted Graph Neural Networks}
In the traditional GNN model, the parameters of the model are learned on the original graph dataset $\mathcal{G}=\{G_1,\cdots, G_m\}$. Because the sample weights $\mathbf{w}$ learned by the causal variable distinguishing regularizer are capable of globally decorrelating the high-level variables, we propose to use the sample weights to reweight the GNN loss, and iteratively optimize sample weights $\mathbf{w}$ and the parameters of weighted GNN model as follows:
\begin{equation}
    f^{(t+1)}, g^{(t+1)}=\mathop{\arg\min}_{f,g}\sum_{i=1}^m \mathbf{w}^{(t)}_i L(g(f(\mathbf{A}_i, \mathbf{F}_i)), y_i),
\label{Eq:gnn_loss}
\end{equation}

\begin{equation}
\small
\begin{aligned}
\mathbf{w}^{(t+1)}=\mathop{\arg\min}_{\mathbf{w}^{(t+1)}\in\Delta_m}\sum_{1\leq i<j\leq n_L}\hat{\text{HSIC}}^{k,l}_0(\mathbf{H}_{,(i-1)d:id}^{(t+1)},\mathbf{H}^{(t+1)}_{,(j-1)d:jd},\mathbf{w}^{(t)}),
\end{aligned}
\label{Eq:w_loss}
\end{equation}
where $f(\cdot)$ is the representation part of our model and its output is the high-level representation $\mathbf{H}$, $\mathbf{H}^{(t+1)}=f^{(t+1)}(\mathbf{A}, \mathbf{F})$, $t$ represents the iteration number, $g(\cdot)$ is a linear prediction layer, and $L(\cdot,\cdot)$ represents the loss function depends on which task we target. When updating the sample weights and the parameters of GNN model being fixed, we need to optimize the objective function Eq.~(\ref{Eq:w_loss}). We update the sample weights by mature optimization technique, i.e., Adam. After obtaining the sample weight of each graph in the batch, we optimize the objective function of weighted GNNs Eq.~(\ref{Eq:gnn_loss}) by an Adam optimizer~\cite{kingma2014adam}. For the classification task, cross-entropy loss is used, and for the regression task, 
least squared loss is used. Initially, $\mathbf{w}^{(0)}=(1,1,\cdots,1)^\mathrm{T}$ in each mini-batch. In the training phase, we iteratively optimize sample weights and model parameters with Eq.~(\ref{Eq:gnn_loss}) and (\ref{Eq:w_loss}). During the inference phase, the predictive model directly conducts prediction based on the GNN model without any calculation of sample weights. The detailed procedure of our model is shown in Algorithm~\ref{alg:stableGNN}.
\par Although StableGNN still performs on dataset $\mathcal{G}$, the weight $\mathbf{w}_i$ of each graph is no longer same. This weight adjusts the contribution of each graph in the mini-batch loss, so that the GNN parameters are learned on the dataset that each high-level features are decorrelated\footnote{In this paper, we slightly abuse the term "decorrelate/decorrelation", which means removing both the linear and non-linear dependence among features unless specified.} which can better learn the true correlation between relevant features and labels.

\begin{algorithm}
    \caption{Training process of StableGNN}
    \label{alg:stableGNN}
    \KwIn{Training graphs $\mathcal{G}_{train}=\{(G_1, y_1), \cdots,(G_N, y_N)\}$; 
    \\ Training Epoch:$Epoch$;
    \\ Decorrelation Epoch: $DecorEpoch$;}
    \KwOut{Learned GNN model;}
    \While{$t<Epoch$}{
    \For{1 \rm{to} \rm{BatchNumber}}
    {
      Forward propagation to generate $\mathbf{H}$\;
      \For{1 \rm{to} $DecorEpoch$}
      {
        Optimize sample weights $\mathbf{w}$ via Eq.~(\ref{Eq:w_loss})\;
      }
      Back propagate with weighted GNN loss to update $f$ and $g$ via Eq.~(\ref{Eq:gnn_loss})\;
    }}
\end{algorithm}
\paratitle{Discussions.} Our proposed framework, StableGNN, aims to relieve the distribution shifts problem by causal representation learning diagram in a general way. Specifically, a new causal representation learning for graphs that seamless integrates the power of representation learning and causal inference is proposed. For the representation learning part, we could utilize various state-of-the-art graph pooling layer~\cite{zhang2018end,ying2018hierarchical,lee2019self,gao2019graph} to extract high-level representations, nevertheless, the main intuition of our work is that we should learn high-level meaningful representation rather than meaningless mixed representation in our proposed learning diagram. This point is key for meaningful and effective causal learning, which is validated in Section~\ref{Sec::real-world}.

\paratitle{Limitations.} In our model, we assume a general causal variable relationships in Figure~\ref{fig:causal view}. Nevertheless, for some datasets or tasks, there may exist more complicated causal relationships among high-level variables, hence discovering causal structure for these high-level variables may be useful for reconstructing the latent data generation process and achieving better generalization ability.

\paratitle{Time Complexity Analysis.} As the proposed framework consists of two parts, we analyze them separately. For the graph high-level representation learning part, to cluster nodes, although it requires the additional computation of an assignment matrix, we observed that the Diffpool layer did not incur substantial additional running time in practice. The reason is that each DiffPool layer reduces the size of graphs by extracting a coarser high-level representation of the graph, which speeds up the graph convolution operation in the next layer. For CVD regularizer, given $n_L$ learned high-level representation, the complexity of computing HSIC value of each pair of high-level variables is $\mathcal{O}(m^2)$~\cite{song2012feature}, where $m$ is the batch size. Hence, for each batch, the computation complexity of CVD regularizer is $\mathcal{O}(tn_L(n_L-1)m^2)$, where $t$ is the number of epochs to optimize $\mathbf{w}$ and $n_L$ is a very small number.

\section{Experiments}
In this section, we evaluate our algorithm on both synthetic and real-world datasets, comparing with state-of-the-arts.

\paratitle{Model Configurations.} In our experiments, the GNN layer used for DiffPool layer is built on top of the GraphSAGE~\cite{hamilton2017inductive} or GCN~\cite{kipf2016semi} layer, to demonstrate that our framework can be applied on top of different GNN models. We use the ``max pooling'' variant of GraphSAGE.  One DiffPool layer is used for all the datasets and more sophisticated hierarchical layers could be learned by stacking multiple DiffPool layers. For our model, StableSAGE/StableGCN refers to using GraphSAGE/GCN as base model, respectively. All the models are trained with same learning rate mode and the model for prediction is selected based on the epoch with best validation performance.

\subsection{Experiments on Synthetic Data}
\label{sec::syn}
To better verify the advantages of StableGNN on datasets with different degrees of distribution shifts between training set and testing set, we generate the synthetic datasets with a clear generation process so that the bias degree of datasets is controllable.

\paratitle{Dataset.} We aim to generate a graph classification dataset that has a variety of distribution shifts from training dataset to testing dataset. Inspired by recent studies on GNN explanation~\cite{ying2019gnnexplainer,lin2021generative}, we utilize motif as a subgraph of graphs.  We first generate a base subgraph for each graph, that is, each positive graph has a ``house''-structured network motif and each negative graph has a motif that is randomly drawn from 4 candidate motifs (i.e., star, clique, diamond and grid motifs). Hence, the ``house'' motif is the causal structure that causally determines the label. To inject spurious correlation, $\mu * 100\%$ of positive graphs will be added ``star'' motif and the remaining positive and negative graphs will randomly add a non-causal motif from 4 candidate motifs. The node features are drawn from the same uniform distribution for all nodes. We set $\mu$ as \{0.6, 0.7, 0.8, 0.9\} to get four spurious correlation degrees for the training set. And we set $\mu=0.5$ to generate OOD validation set and $\mu=0.25$ to generate an unbiased testing dataset. The larger $\mu$ for the training set means there incurs a larger distribution shift between the training and testing sets. The resulting graphs are further perturbed by adding edges from the vertices of the first motif to one of the vertices of the second motif in each graph with the probability 0.25. The number of training samples is 2000, and for validation and testing set is 1000.

\paratitle{Experimental Settings and Metrics.} As the synthetic data has a known generating mechanism and our model is based on the GraphSAGE/GCN, to clearly validate and explain the effectiveness of our framework helping base GNN get rid of spurious correlation, in this subsection, we only compare with the base models. The number of layers of GraphSAGE and GCN is set as 5. The dropout rate for all the methods is set as 0.0. For all the GNN models, an initial learning rate is set as $1\times 10^{-3}$, the reduce factor is 0.5, and the patience value is 5. And the learning rate of CVD regularizer is selected from $\{0.1, 0.3, \cdots, 1.3\}$.  To baselines, the training epoch is set as 50. For Stable-SAGE/GCN, we set 20 epochs to warm up the parameters, i.e., training without the CVD regularizer, and 30 epochs to train the whole model. The decorrelation epoch to learn sample weights is set as 50. For all the methods, we take the model of the epoch with the best validation performance for prediction. The batch size is set as 250. The maximum number of clusters (i.e., high-level representations) is set as 7 for StableSAGE and 8 for StableGCN. For all the baseline models, if not mentioned specially, we aggregate node embeddings by mean pooling readout function to get the graph-level representation. Following~\cite{hendrycks2019benchmarking}, we augment each GNN layer with batch normalization (BN)~\cite{ioffe1502accelerating} and residual connection~\cite{he2016deep}. We evaluate the methods with three widely used metrics for binary classification, i.e., Accuracy, F1 score and ROC-AUC~\cite{huang2005using}. For all the experiments, we run 4 times with different random seeds and report their mean and standard error of prediction value with the corresponding metric on the test set in percent.

\begin{table*}[thbp]
\centering
\setlength\tabcolsep{20pt}
\caption{Results on synthetic datasets in different settings. The `improvements' means the improvement percent of StableSAGE/StableGCN against GraphSAGE/GCN.}
\begin{tabular}{ccccc}
\hline
 \multicolumn{1}{l}{Correlation Degree ($\mu$)} & Method    & Accuracy       & F1 score             & ROC-AUC                                 \\ \hline 
 \multirow{3}{*}{0.6}                           & GraphSAGE & 69.68$\pm$0.91 & 67.83$\pm$1.41  & 77.49$\pm$0.71                      \\ 
                                                                              & StableSAGE & 73.93$\pm$0.66 & 73.05$\pm$1.21  & \multicolumn{1}{c}{79.22$\pm$1.07} \\  
                                                                            & Improvements  & 6.10\%         & 7.70\%          & 2.23\%                              \\ \cline{1-5} 
                                  \multirow{3}{*}{0.7}                           & GraphSAGE & 67.85$\pm$0.76 & 63.76$\pm$1.32  & 75.55$\pm$1.01                      \\ 
                                                                                & StableSAGE & 73.9$\pm$1.78  & 71.23$\pm$1.88  & 81.48$\pm$4.13                      \\ 
                                                                                 & Improvements  & 8.92\%         & 11.71\%         & 7.85\%                              \\ \cline{1-5} 
                                  \multirow{3}{*}{0.8}                           & GraphSAGE & 65.67$\pm$1.22 & 60.23$\pm$0.81  & 72.70$\pm$1.45                      \\ 
                                                                                & StableSAGE & 72.15$\pm$1.26 & 68.56$\pm$0.87  & 81.35$\pm$2.12                      \\ 
                                                                                & Improvements  & 9.86\%         & 13.83\%         & 9.98\%                              \\ \cline{1-5}             
                                  \multirow{3}{*}{0.9}                           & GraphSAGE & 65.2$\pm$0.94  & 54.24$\pm$1.98  & 72.89$\pm$0.67                      \\ 
                                                                                & StableSAGE & 70.35$\pm$1.66 & 64.84$\pm$2.54 & 80.31$\pm$1.78                      \\ 
                                                                                & Improvements  & 7.90\%         & 19.54\%         & 10.18\%                             \\ \hline                             \hline
\multirow{3}{*}{0.6}                           & GCN       & 70.98$\pm$0.93 & 67.06$\pm$2.95  & 77.55$\pm$0.55                      \\ 
                                                                             & StableGCN & 74.92$\pm$1.91 & 73.91$\pm$2.49  & 81.79$\pm$2.42                      \\ 
                                                                              & Improvements  & 5.56\%         & 10.21\%         & 5.47\%                              \\ \cline{1-5} 
                                 \multirow{3}{*}{0.7}                           & GCN       & 70.9$\pm$1.45  & 65.57$\pm$3.69  & 78.27$\pm$1.53                      \\ 
                                                                            & StableGCN & 73.15$\pm$2.62 & 70.29$\pm$2.76  & 79.77$\pm$3.42                      \\ 
                                                                                & Improvements  & 3.17\%         & 7.198\%         & 1.92\%                              \\ \cline{1-5} 
                                 \multirow{3}{*}{0.8}                           & GCN       & 70.35$\pm$0.50 & 65.41$\pm$0.85  & 75.76$\pm$1.09                      \\ 
                                                                               & StableGCN & 74.5$\pm$1.03  & 70.94$\pm$1.69  & 81.53$\pm$0.86                      \\ 
                                                                             & Improvements  & 5.90\%         & 8.45\%          & 7.62\%                              \\ \cline{1-5} 
                                 \multirow{3}{*}{0.9}                           & GCN       & 69.68$\pm$0.56 & 62.28$\pm$1.38  & 76.61$\pm$0.66                      \\ 
                                                                               & StableGCN & 76.35$\pm$1.37 & 72.64$\pm$2.62  & 83.24$\pm$0.58                      \\ 
                                                                                & Improvements  & 9.57\%         & 16.63\%         & 8.65\%                              \\ \hline
\end{tabular}

\label{Table:syn}
\end{table*}

\paratitle{Results on Synthetic Data.} The results are given in Table~\ref{Table:syn}, and we have the following observations. First, both the GraphSAGE and GCN  suffer from serious performance decrease with the increase of spurious correlation degree, e.g., for F1 score, GraphSAGE drops from 67.83 to 54.24, and GCN drops from 67.06 to 62.28, indicating that spurious correlation greatly affects the GNNs' generalization performance and the heavier distribution shifts will cause a larger performance decrease. Second, compared with the base model, our proposed models achieve up to 19.64\% performance improvements, and gain larger improvements under heavier distribution shifts. As we know the "house" motif is decisive for the label, the only way to improve the performance is to utilize this causal subgraph, demonstrating that our models could significantly reduce the influence of spurious correlation among subgraphs and reveal the true relationship between causal subgraphs with labels. Third, when building our framework both on GraphSAGE and GCN, our framework could achieve consistent improvements, and it indicates that StableGNN is a general framework and has the potential to adapt to various GNN architectures.

\paratitle{Explanation Analysis.} An intuitive type of explanation for GNN models is to identify subgraphs that have a strong influence on final decisions~\cite{ying2019gnnexplainer}. To demonstrate whether the model focuses on the relevant or irrelevant subgraphs while conducting prediction, we utilize GNNExplainer~\cite{ying2019gnnexplainer} to calculate the most important subgraph with respect to GNN's prediction and visualize it with red color. As GNNExplainer needs to compute an edge mask for explanation and GraphSAGE cannot be aware of the edge weights, we explain the GCN-based model, i.e., GCN and StableGCN. As shown in Figure~\ref{fig:syn_explainer}, we find the following interesting cases contributing to the success of our model, where each case represents a kind of failure easily made by existing methods. 
\begin{itemize}
    \item \textbf{Case 1.} As we can see, GCN assigns the higher weights to ``star'' motif, however, StableGCN concentrates more on ``house'' motif. Although GCN could make correct predictions based on the ``star'' motif, which is highly correlated with ``house'' motif, this prediction is unstable and it means that if there incurs spurious correlations in the training set, the model could learn the spurious correlation and rely on this clue for predictions. This unstable prediction is undesirable, as the unstable structure may change during the wild testing environments.
    \item \textbf{Case 2.} In this case, there is a ``grid'' motif connecting with ``house'' motif. As we can see, GCN pays more attention on ``grid'' motif and StableGCN still concentrates on ``house'' motif. Due to the existence of spurious correlated subgraphs, it will reduce the confidence of the true causal subgraph for prediction. When there appears another irrelevant subgraph, GCN may also pay attention on this irrelevant subgraph, leading to incorrect prediction results. However, our model could focus on the true causal subgraphs regardless of which kind of subgraphs are associated with them.
    
    \item \textbf{Case 3.} This is a case for negative samples. The spurious correlation leads to the GCN model focusing on the ``star'' motif. As the ``star'' motif is correlated with the positive label, GCN model will predict this graph as a positive graph. In contrast, due to the decorrelation of subgraphs in our model, we find that the ``star'' motif may not discriminate to the label decision and ``diamond'' motif may attribute more to the negative labels.
     
\end{itemize}

\begin{figure}[t]
	\centering
	\includegraphics[width=9cm]{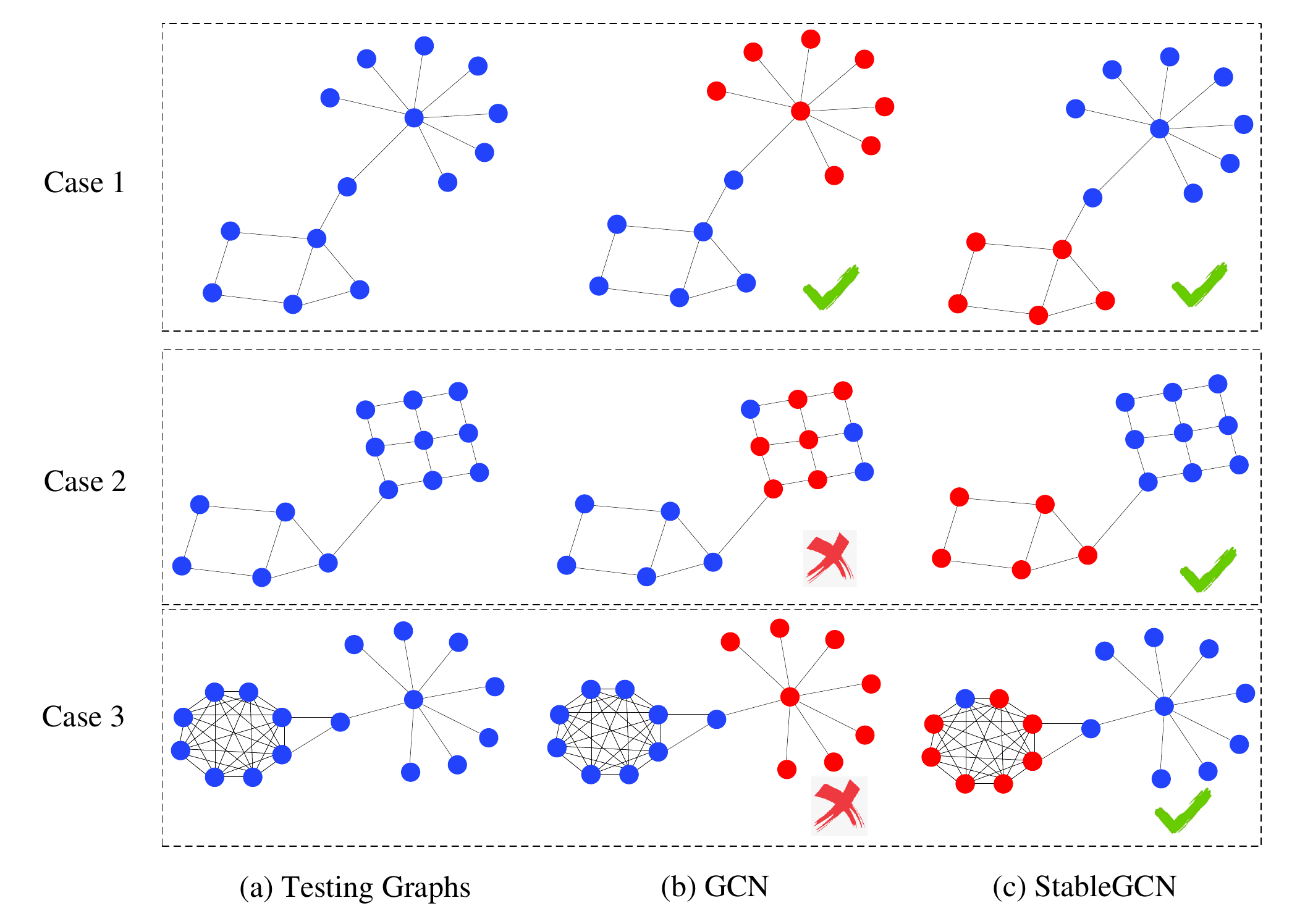}
	\caption{Explanation cases of GCN and StableGCN. Red nodes are the important subgraph calculated by the GNNExplainer.}
	\label{fig:syn_explainer}
\end{figure}
\subsection{Experiments on Real-world Datasets}
\label{Sec::real-world}
In this section, we apply our StableGNN algorithm on eight real-world datasets for out-of-distribution graph property prediction.

\paratitle{Datasets.} We adopt seven molecular property prediction datasets from OGB Datasets~\cite{hu2020open}.  All the molecules are pre-processed using RDKit~\cite{landrum2006rdkit}. Each graph represents a molecule, where nodes are atoms, and edges are chemical bonds. We use the 9-dimensional node features and 3-dimensional edges features provided by ~\cite{hu2020open}, which has better generalization performance. The task of these datasets is to predict the target molecular properties, e.g., whether a molecule inhibits HIV virus replication or not. Input edge features are 3-dimensional,
containing bond type, bond stereochemistry as well as an additional bond feature indicating whether
the bond is conjugated. Depending on the properties of molecular, these datasets can be categorized into three kinds of tasks: binary classification, multi-label classification and regression. Different from commonly used random splitting, these datasets adopt a scaffold splitting~\cite{wu2018moleculenet} procedure that splits the molecules with different scaffolds into training or testing sets. All the molecules with the same scaffold could be treated as an environment, and the scaffold splitting attempts to separate molecules with different scaffolds into different subsets. For example, two molecules, Cyclopropanol (C3H6O) and 1,4-Cyclohexanediol (C6H12O2), contain different scaffold patterns: the former scaffold is 3C-ring and the latter is 6C-ring. Although sampled with different distributions, they are both readily soluble in water due to the invariant subgraph hydroxy (-OH) attached to different scaffolds~\cite{ishida2021graph}. The environments of training and testing sets are different, resulting in different graph distributions. Therefore, this kind of data could be used to test whether the model could leverage the causal subgraph to make predictions, i.e., the generalization ability of GNNs on graphs with different distributions. Moreover, we also conduct the experiments on a commonly used graph classification dataset, MUTAG~\cite{debnath1991structure}, as we could explain the results based on the knowledge used in~\cite{luo2020parameterized}. It consists of 4,337 molecule graphs. Each graph is assigned to one of 2 classes based on its mutagenic effect. Note that this dataset cannot adopt the scaffold splitting, as the +4 valence \textit{N} atom, which commonly exists in this dataset, is illegal in the RDKit~\cite{landrum2013rdkit} tool used for scaffold splitting. We just use the random splitting for this dataset, however, we still believe there are some OOD cases in the testing set. The splitting ratio for all the datasets is 80/10/10. The detailed statistics are shown in Table~\ref{Tab:Datasets}.

\paratitle{Baselines.}  As the superiority of GNNs against traditional methods on graph-level tasks, like kernel-based methods~\cite{shervashidze2011weisfeiler,shervashidze2009efficient}, has been demonstrated by previous literature~\cite{ying2018hierarchical}, here, we mainly consider baselines based upon several related and state-of-the-art GNNs. 
\begin{itemize}
    \item Base models: GraphSAGE~\cite{hamilton2017inductive} and GCN~\cite{kipf2016semi} are classical GNN methods. We utilize them as base models in our framework, so they are the most related baselines to validate the effectiveness of the proposed framework. 
    \item DIFFPOOL~\cite{ying2018hierarchical}: It is a hierarchical graph pooling method in an end-to-end fashion. It adopts the same model architecture and hyperparamter setting with high-level variable learning components in our framework, except that DIFFPOOL model aggregates the clusters' representation by summation operation rather than an ordered concatenation operation to generate the final high-level representations.
    \item GAT~\cite{velivckovic2017graph}: It is an attention-based GNN method, which learns the edge weights by an attention mechanism.
    \item GIN~\cite{xu2018how}: It is a graph isomorphism network that is as powerful as the WL graph isomorphism test. We compare with two variants of GIN, i.e., whether the $\epsilon$ is a learnable parameter or a fixed 0 scalar, denoted as GIN and GIN0, respectively. 
    \item MoNet~\cite{monti2017geometric}: It is a general architecture to learn on graphs and manifolds using the bayesian gaussian mixture model.
    \item SGC~\cite{pmlr-v97-wu19e}: It is a simplified GCN-based method, which reduces the excess complexity through successively removing nonlinearities and collapsing weight matrices between consecutive layers.
    \item JKNet~\cite{xu2018representation}: It is a GNN framework that selectively combines different aggregations at the last layer.
    \item DIR~\cite{wu2022discovering}: It is a GNN method designed for the distribution shift problem, which disentangles the casual and non-causal subgraphs.
    \item CIGA~\cite{chen2022learning}: It is a GNN method that learns causally invariant representations for OOD generalization on graphs.
\end{itemize}

\paratitle{Experimental Settings and Metrics.} Here, we only describe the experimental settings that are different from Section~\ref{sec::syn}. For all GNN baselines, we follow~\cite{hu2020open} which set the number of layers as 5. The dropout rate after each layer for all the methods on the datasets for binary classification and multi-label classification is set as 0.5, and for regression datasets, the dropout rate is set as 0.0. The number of batch size for binary classification and multi-label classification is set as 128, and for regression datasets, the batch size is set as 64. The training epoch for all the baselines is set as 200. And for our model, we utilize 100 epochs to warm up and 100 epochs to train the whole model. The maximum number of clusters is set as 7 for our models and DIFFPOOL. For DIR and CIGA, the hyperparameter $s_c$ the selection ratio of casual subgraph is chosen from \{0.1, 0.2, 0.25, 0.3, 0.4, 0.5, 0.6, 0.7, 0.8, 0.9\}, and the hyperparameters $\alpha$ and $\beta$ for contrastive loss and hinge loss of CIGA are both chosen from \{0.5, 1, 2, 4, 8, 16, 32\} according to the validation performances.  Moreover, as the loss of DIR and CIGA are specifically designed for the multi-classification task, they cannot perform on the datasets with multi-label classification and regression tasks. As these datasets usually treat chemical bond type as their edge type, to include edge features, we follow~\cite{hu2020open} and add transformed edge features into the incoming node features. For the datasets from OGB, we use the metrics recommended by the original paper for each task~\cite{hu2020open}. And following~\cite{ying2019gnnexplainer}, we adopt Accuracy for MUTAG dataset. 

\begin{table*}[]
\setlength\tabcolsep{10pt}
\caption{Summary of real-world datasets.}
\begin{tabular}{cccccccc}
\hline
Dataset     & Task Type & \#Graphs & \begin{tabular}[c]{@{}c@{}}Average\\ \#Nodes\end{tabular} & \begin{tabular}[c]{@{}c@{}}Average\\ \#Edges\end{tabular} & \#Task                   & Splitting Type     & Metric   \\ \hline

Molbace   & Binary classification    & 1,513    & 34.1                                                      & 36.9                                                      & 1           & Scaffold splitting & ROC-AUC  \\ 
Molbbbp    & Binary classification   & 2,039    & 24.1                                                      & 26.0                                                      & 1           & Scaffold splitting & ROC-AUC  \\
Molhiv  & Binary classification     & 41,127   & 25.5                                                      & 27.5                                                      & 1            & Scaffold splitting & ROC-AUC  \\ 
MUTAG  & Binary classification  & 4,337    & 30.32                                                     & 30.77                                                     & 1           & Random splitting   & Accuracy \\ 
Molclintox   & Multi-label classification & 1,477    & 26.2                                                      & 27.9                                                      & 2       & Scaffold splitting & ROC-AUC  \\
Moltox21   & Multi-label classification  & 7,831    & 18.6                                                      & 19.3                                                      & 12      & Scaffold splitting & ROC-AUC  \\ 
Molesol   & Regression   & 1,128    & 13.3                                                      & 13.7                                                      & 1                       & Scaffold splitting & RMSE     \\ 
Mollipo  & Regression   & 4,200    & 27.0                                                      & 29.5                                                      & 1                       & Scaffold splitting & RMSE     \\ \hline
\end{tabular}
\label{Tab:Datasets}
\end{table*}

\paratitle{Results on Real-world Datasets.} The experimental results on eight datasets are presented in Table~\ref{Table:Real-results}, and we have the following observations. First, comparing with these competitive GNN models, Stable-SAGE/GCN achieves 6 rank one and 2 rank two on all eight datasets. And the average rank of StableSAGE and StableGCN are 1.75 and 2.87, respectively, which is much higher than the third place, i.e., 4.38 for MoNet. It means that most existing GNN models cannot perform well on OOD datasets and our model significantly outperforms existing methods, which well demonstrates the effectiveness of the proposed causal representation learning framework. Second, compared with the base models, i.e., GraphSAGE and GCN, our models achieve consistent improvements on all datasets, validating that our framework could boost the existing GNN architectures. Third, Stable-SAGE/GCN also outperforms DIFFPOOL method by a large margin. Although we utilize the DiffPool layer to extract high-level representations in our framework, the seamless integration of representation learning and causal learning is the key to the improvements of our model. Fourth, Stable-SAGE/GCN outperforms DIR and CIGA, showing the effectiveness of our model over them on OOD generalization problem on graphs. And DIR and CIGA need to set a hyperparameter for the selection ratio of casual subgraph for all graphs, where the selection ratio is fixed and the same for all graphs and it is hard to set in practice. Fifth, Stable-SAGE/GCN achieves superior performance on datasets with three different tasks and a wide range of dataset scales, indicating that our proposed framework is general enough to be applied to datasets with a variety of properties. 
\begin{table*}[]
\caption{Performance of real-world graph datasets. The number in the $(\cdot)$ along with each performance number means the rank of the method among all the methods on this dataset. Because the losses of DIR and CIGA are designed for binary/multi-classifcation task, they cannot perform on the datasets with multilabel classification and regression tasks. Hence, we only show the results of them on binary classification task and do not rank them. ``$\uparrow$'' means that for this metric, the larger value means better performance. ``$\downarrow$'' means that for this metric, the smaller value means better performance. Best results of
all methods are indicated in bold.}\resizebox{.98\textwidth}{!}{
\setlength\tabcolsep{15pt}
\centering
\begin{tabular}{c|ccccc}
\hline
\multirow{3}{*}{Methods}  & \multicolumn{4}{c}{Binary Classification}                                                     &              \\ 
          & Molhiv                 & Molbace               & Molbbbp               & MUTAG                 &              \\ 
          & ROC-AUC  ($\uparrow$)               & ROC-AUC  ($\uparrow$)              & ROC-AUC  ($\uparrow$)              & Accuracy  ($\uparrow$)             &              \\ \hline
 
GIN       & 76.21$\pm$0.53 (6)      & 74.50$\pm$2.75 (9)     & 67.72$\pm$1.89 (4)     & 78.86$\pm$1.15 (10)     &              \\ 
GIN0      & 75.49$\pm$0.91 (9)      & 74.36$\pm$3.48 (10)     & 66.65$\pm$1.32 (6)     & 79.03$\pm$0.56 (9)     &              \\ 
GAT       & 76.52$\pm$0.69 (5)      & \textbf{81.17$\pm$0.8 (1)}      & 67.17$\pm$0.49 (5)     & 79.26$\pm$1.13 (8)     &              \\ 
MoNet     & 77.13$\pm$0.79 (3)      & 76.92$\pm$0.91 (6)     & \textbf{69.52$\pm$0.46 (1)}    & 79.95$\pm$0.86 (4)     &              \\ 
SGC       & 69.46$\pm$1.44 (11)     & 71.28$\pm$1.79 (11)    & 61.17$\pm$2.91 (11)     & 69.53$\pm$0.77 (11)    &              \\ 
JKNet     & 74.99$\pm$1.60 (10)      & 78.99$\pm$13.4 (3)     & 65.62$\pm$0.77 (9)     & 79.49$\pm$1.16 (7)     &              \\ 
DIFFPOOL  & 75.75$\pm$1.38 (8)      & 74.69$\pm$11.13 (8)    & 63.35$\pm$2.21 (10)     & 80.13$\pm$1.32 (3)     &              \\\hline
DIR & 61.40$\pm$10.69 & 60.81$\pm$10.47 & 54.65$\pm$8.08&62.61$\pm$3.00 &\\
CIGA & 60.24$\pm$9.13 & 72.52$\pm$8.43 & 60.06$\pm$5.66&74.12$\pm$6.19 &\\\hline
GCN       & 76.63$\pm$1.13 (4)      & 75.88$\pm$1.85 (7)     & 66.47$\pm$0.90 (7)     & 79.89$\pm$1.32 (5)     &              \\
StableGCN       & \textbf{77.79 $\pm$1.19 (1)}      & 76.95$\pm$3.27 (5)     & 68.82$\pm$3.87 (2)     & 81.45$\pm$2.00 (2)     &              \\ \hline
GraphSAGE & 75.78$\pm$2.19 (6)      & 78.51$\pm$1.72 (4)     & 66.16$\pm$0.97 (8)     & 79.78$\pm$0.78 (6)     &              \\ 
StableSAGE & 77.63$\pm$0.79 (2)      & 80.73$\pm$3.98 (2)     & 68.47$\pm$2.47 (3)     & \textbf{82.13$\pm$0.32 (1)}     &              \\ \hline
\hline
\multirow{3}{*}{Methods}  & \multicolumn{2}{c}{Multilabel Classification} & \multicolumn{2}{c}{Regression}               &  \multirow{3}{*}{Average-Rank} \\ 
          & Molclintox             & Moltox21              & Molesol               & Molipo                &              \\ 
          & ROC-AUC   ($\uparrow$)      & ROC-AUC  ($\uparrow$)      & RMSE ($\downarrow$)                 & RMSE ($\downarrow$)                 &   ($\downarrow$)           \\ \hline

GIN       & 86.86$\pm$3.78 (6)      & 64.20$\pm$0.23 (11)    & 1.1002$\pm$0.0450 (4)  & 0.8051$\pm$0.0323 (10)  & 7.50 (9)         \\ 
GIN0      & 89.31$\pm$2.11 (3)      & 64.62$\pm$0.87 (10)     & 1.1358$\pm$0.0587 (5)  & 0.8050$\pm$0.0123 (9)  & 7.63 (10)       \\ 
GAT       & 83.47$\pm$1.37 (8)      & 68.81$\pm$0.48 (5)     & 1.2758$\pm$0.0269 (10)  & 0.8101$\pm$0.0183 (9)  & 6.38 (7)        \\ 
MoNet     & 86.75$\pm$1.22 (7)      & 67.02$\pm$0.26 (7)     & 1.0753$\pm$0.0357 (3)  & 0.7379$\pm$0.0117 (4)  & 4.38 (3)        \\ 
SGC       & 77.76$\pm$1.87 (10)     & 66.49$\pm$1.10 (8)     & 1.6548$\pm$0.0462 (11) & 1.0681$\pm$0.0148 (10) & 10.38 (11)        \\ 
JKNet     & 81.63$\pm$2.79 (9)      & 65.98$\pm$0.46 (9)     & 1.1688$\pm$0.0434 (7)  & 0.7493$\pm$0.0048 (5)   & 7.38 (8)        \\ 
DIFFPOOL  & 90.48$\pm$2.42 (2)      & 69.05$\pm$0.94 (3)     & 1.176$\pm$0.01388 (8)  & 0.7325$\pm$0.0221 (2)   & 5.50 (4)       \\ \hline
GCN       & 86.23$\pm$2.81 (7)      & 67.75$\pm$0.66 (6)     & 1.153$\pm$0.0392 (6)     & 0.7927$\pm$0.0086 (8)  & 6.25 (6)       \\ 
StableGCN       & 87.98$\pm$2.37 (5)      & \textbf{70.80$\pm$0.31 (1)}     & \textbf{0.9638$\pm$0.0292 (1)}     & 0.7839$\pm$0.0165 (6)  & 2.87 (2)       \\ \hline
GraphSAGE & 88.60$\pm$2.44 (4)      & 68.88$\pm$0.59 (4)     & 1.1852$\pm$0.0353 (9)  & 0.7911$\pm$0.0147 (7)  & 6.00 (5)       \\ 
StableSAGE & \textbf{90.96$\pm$1.93 (1)}      & 69.14$\pm$0.24 (2)     & 1.0092$\pm$0.0706 (2)  & \textbf{0.6971$\pm$0.0297 (1)}  & \textbf{1.75 (1)}        \\ \hline
\end{tabular}}
\label{Table:Real-results}
\end{table*}

\paratitle{Ablation Study.} Note that our framework naturally incorporates high-level representation learning and causal effect estimation in a unified framework. Here we conduct ablation studies to investigate the effect of each component. For our framework without the CVD regularizer, we term it as StableGNN-NoCVD. \footnote{Note that, for simplicity, in the following studies we mainly conduct analysis on StableSAGE, and StableGCN will get similar results. StableGNN will refer to StableeSAGE unless mentioned specifically.} The results are presented in Figure~\ref{fig:ablation}. We first find that StableGNN-NoCVD outperforms GraphSAGE on most datasets, indicating that learning hierarchical structure by our model for graph-level tasks is necessary and effective. Second, StableGNN-NoCVD achieves competitive results or better results with DIFFPOOL method. The only difference between them is that DIFFPOOL model averages the learned clusters' representations and StableGNN-NoCVD concatenates them by a consistent order, and then the aggregated embeddings are fed into a classifier. As the traditional MLP classifier needs the features of all samples should in a consistent order, the superior performance of StableGNN-NoCVD well validates that the learned representations $\mathbf{H}$ are in a consistent order across graphs. Moreover, we find that StableGNN consistently outperforms StableGNN-NoCVD. As the only difference between the two models is the CVD term, we can safely attribute the improvements to the distinguishing of causal variables by our proposed regularizer. Note that on some datasets, we could find that StableGNN-NoDVD cannot achieve satisfying results, however, when combined with the regularizer, it makes clear improvements. This phenomenon further validates the necessity of each component that we should conduct representation learning and causal inference jointly.

\begin{figure*}[!htbp]
\centering
\subfigure[Binary Classification ($\uparrow$)]{
\includegraphics[ width=5cm]{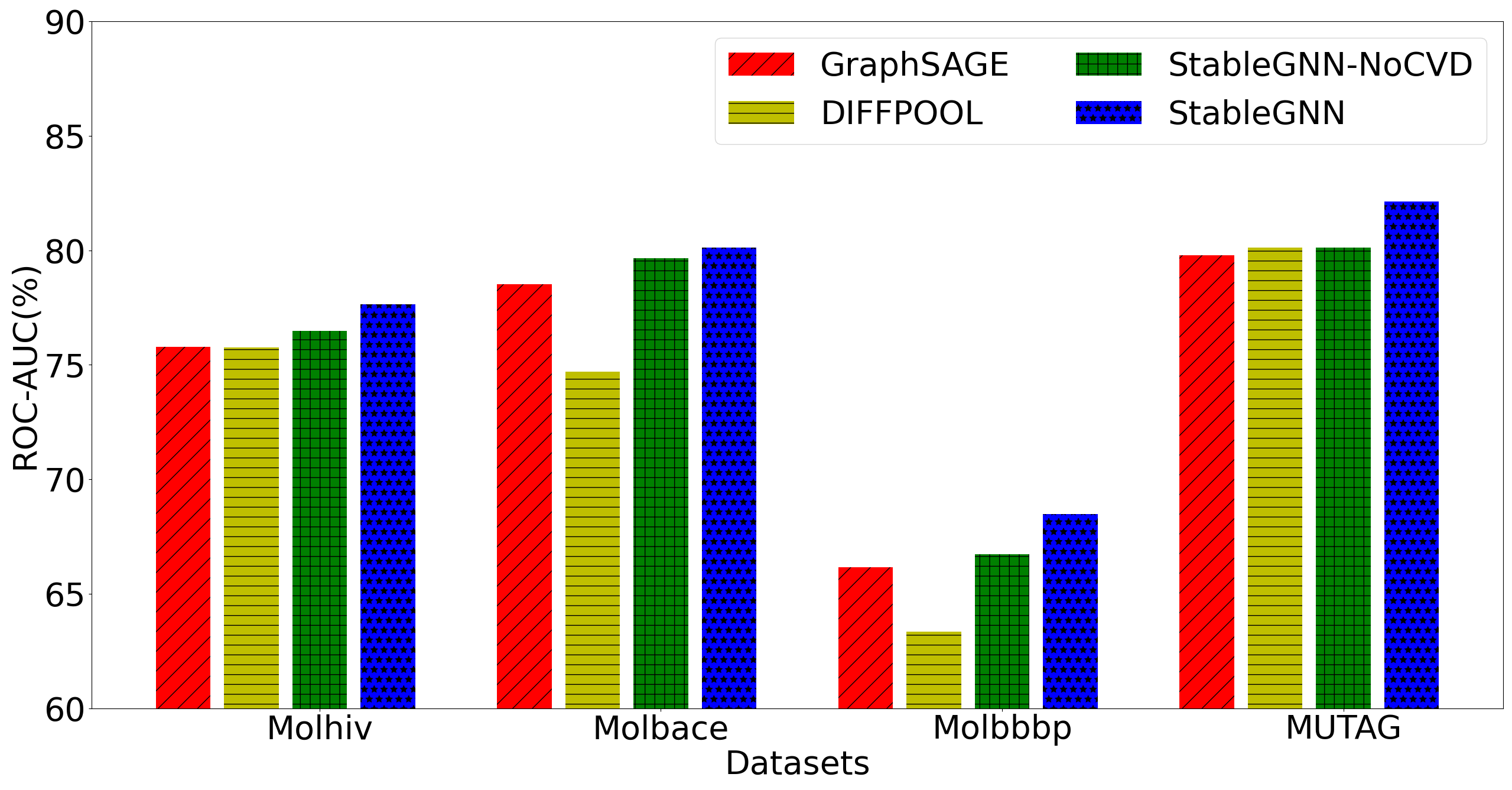}
\label{fig:gradient}
}
\subfigure[Multilabel Classification ($\uparrow$)]{
\includegraphics[ width=5cm]{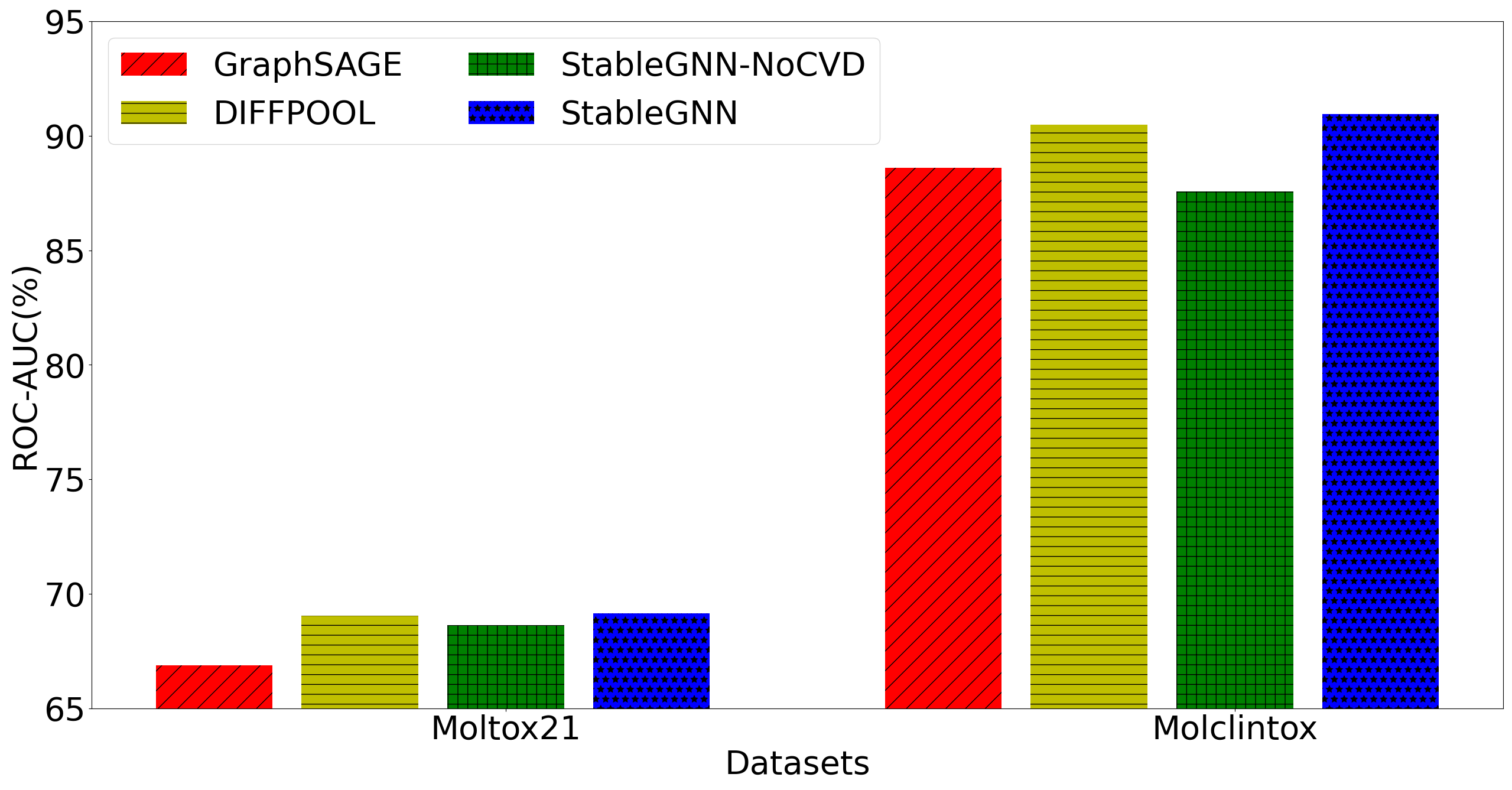}
\label{fig:gradient}
}
\subfigure[Regression ($\downarrow$)]{
\includegraphics[width=5cm]{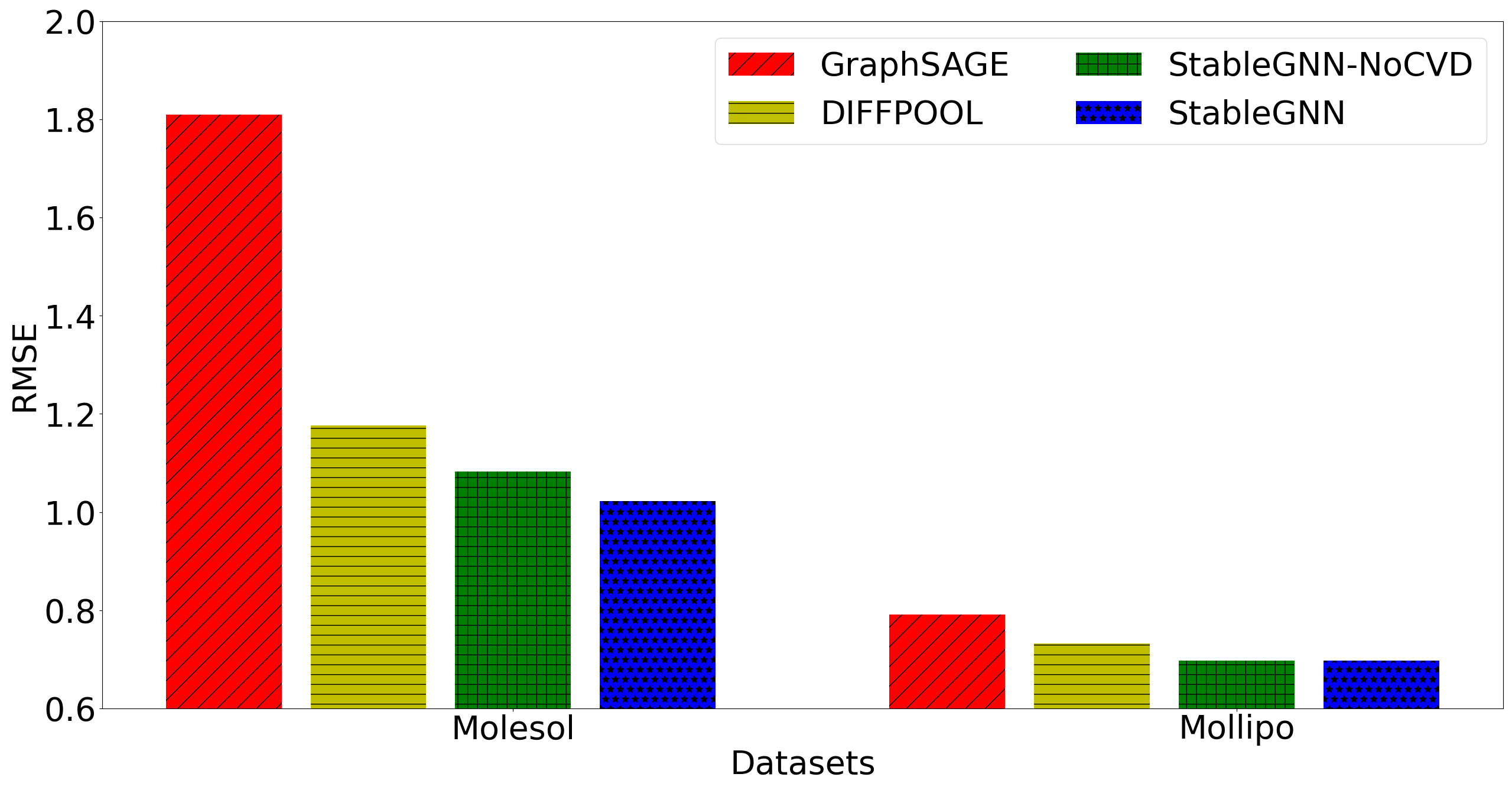}
\label{fig:gradient}
}
\caption{Ablation study on three real-world tasks.}
\label{fig:ablation}
\end{figure*}
\begin{table*}[]
\caption{Performance of different decorrelation methods.}\resizebox{.98\textwidth}{!}{
\begin{tabular}{c|cccccccc}
\hline
Method                 & Molhiv ($\uparrow$)        & Molbace ($\uparrow$)      & Molbbbp ($\uparrow$)       & MUTAG  ($\uparrow$)        & Molclintox ($\uparrow$)    & Moltox21 ($\uparrow$)      & Molesol ($\downarrow$)          & Mollipo ($\downarrow$)          \\ \hline
GraphSAGE              & 75.78$\pm$2.19 & 78.51$\pm$1.72 & 66.16$\pm$0.97 & 79.78$\pm$0.7  & 88.60$\pm$2.44 & 68.88$\pm$0.59 & 1.8098$\pm$0.1220 & 0.7911$\pm$0.0147 \\
GraphSAGE-Decorr       & 76.52$\pm$0.69 & 77.34$\pm$5.63 & 66.28$\pm$0.89 & 80.76$\pm$1.32 & 86.51$\pm$0.82 & 68.77$\pm$0.66 & 1.7889$\pm$0.1234 & 0.8024$\pm$0.0165 \\ \hline
StableGNN-NoCVD        & 76.47$\pm$1.01 & 79.65$\pm$0.86 & 66.73$\pm$1.87 & 80.13$\pm$1.80 & 87.56$\pm$1.91 & 68.63$\pm$0.63 & 1.0819$\pm$0.0219 & 0.6971$\pm$0.017  \\
StableGNN-NoCVD-Decorr & 75.32$\pm$0.52 & 78.71$\pm$2.34 & 67.02$\pm$1.55 & 79.17$\pm$1.07 & 88.89$\pm$2.58 & 69.05$\pm$0.66 & 1.0377$\pm$0.0389 & 0.7171$\pm$0.0378 \\ \hline
StableGNN-NoCVD-Distent & 75.76$\pm$0.64 & 79.04$\pm$1.24 & 64.21$\pm$1.34 & 80.92+0.50 & 90.44$\pm$0.79 & 68.87$\pm$1.28 & 1.0729$\pm$0.0256 & 0.7171$\pm$0.0378 \\ \hline
StableGNN      & \textbf{77.63$\pm$0.79} & \textbf{80.73$\pm$3.98} & \textbf{68.47$\pm$2.47} & \textbf{82.13$\pm$0.32} & \textbf{90.96$\pm$1.93} & \textbf{69.14$\pm$0.24} & \textbf{1.022$\pm$0.0039}  & \textbf{0.6971$\pm$0.0297} \\ \hline
\end{tabular}}
\label{Tab::Difference decorrelation}
\end{table*}

\begin{figure*}[htbp]
\centering
\subfigure[Molbace]{
\includegraphics[ width=5.5cm]{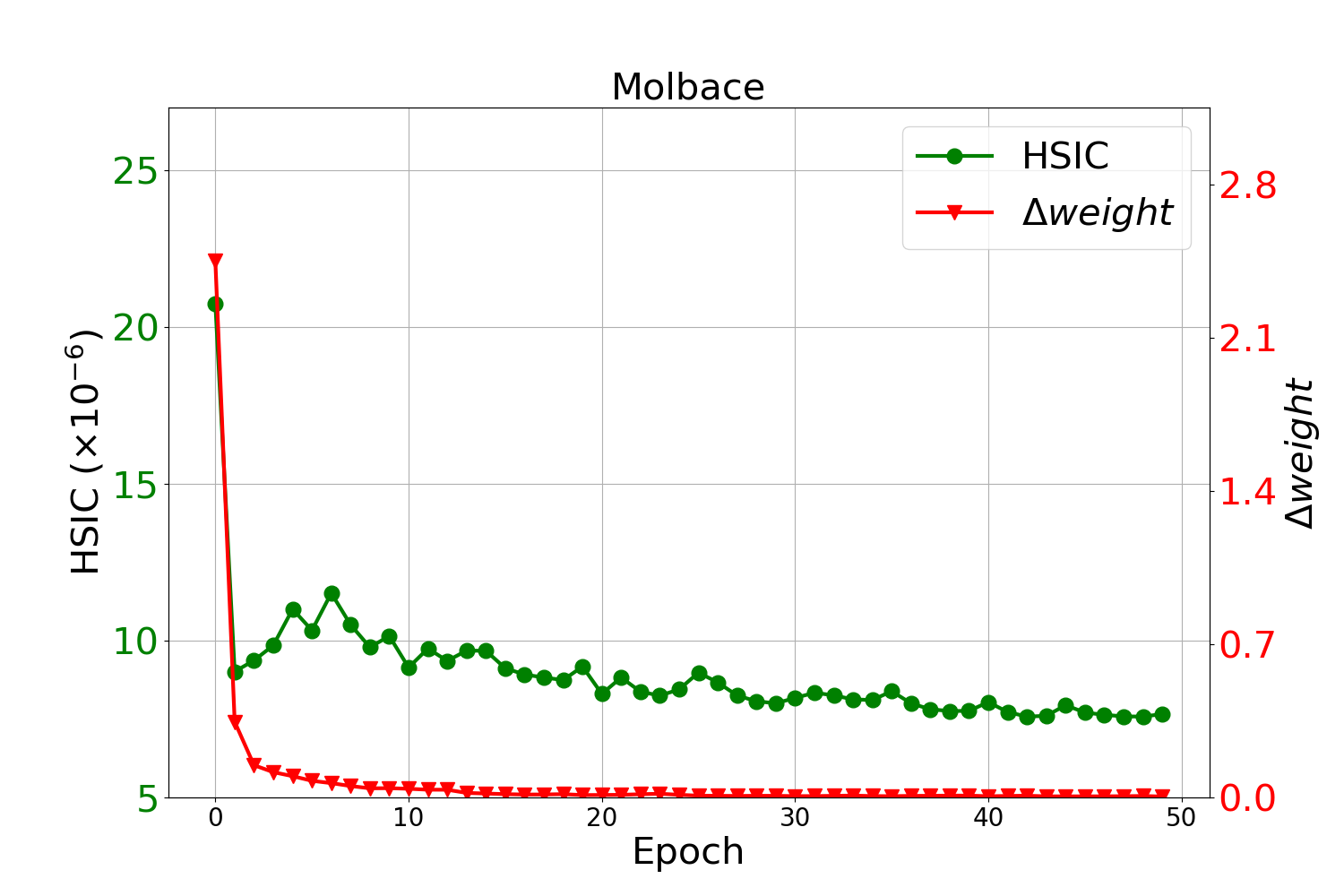}
\label{fig:gradient}
}
\subfigure[Molbbbp]{
\includegraphics[ width=5.5cm]{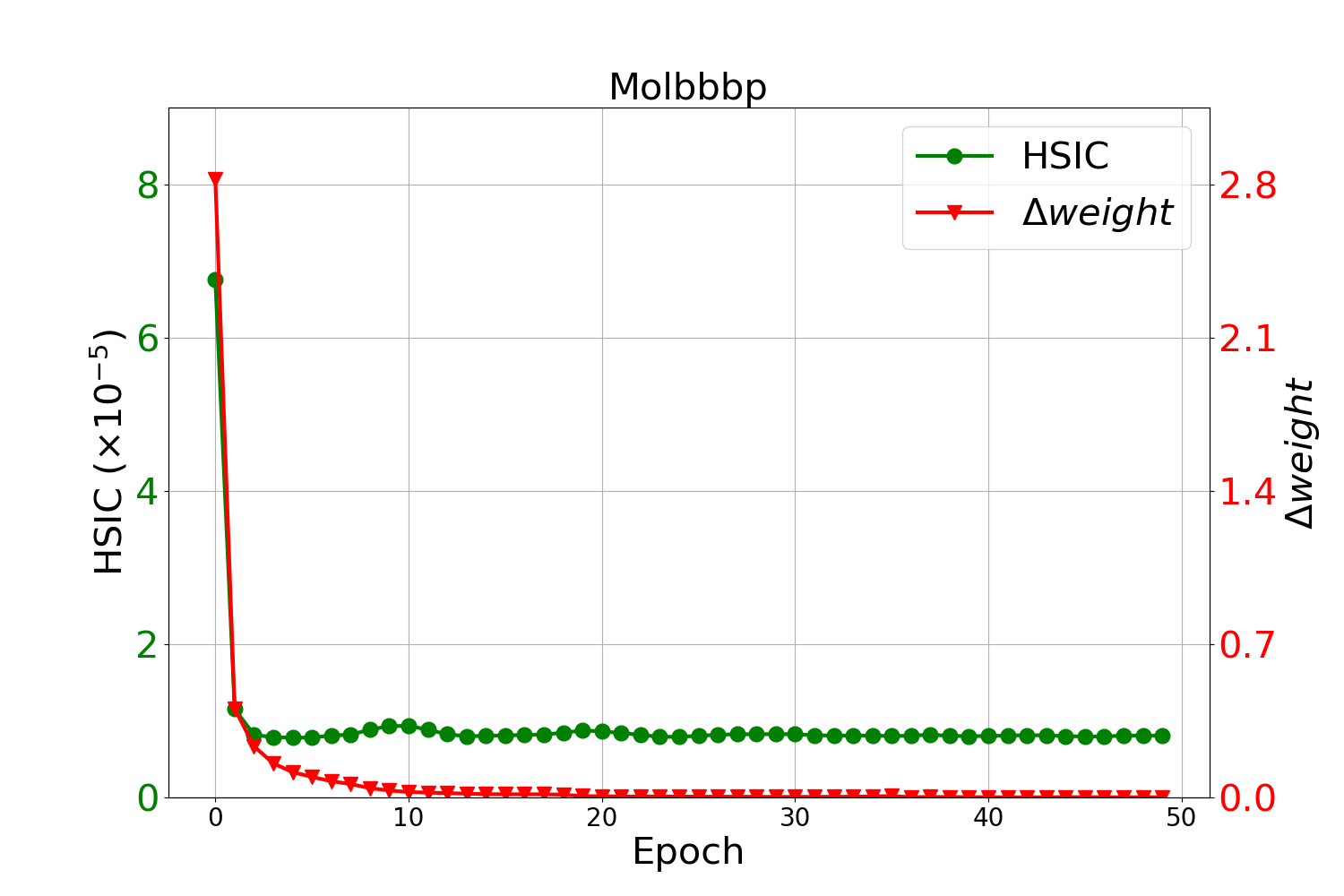}
\label{fig:gradient}
}
\subfigure[Molclintox]{
\includegraphics[width=5.5cm]{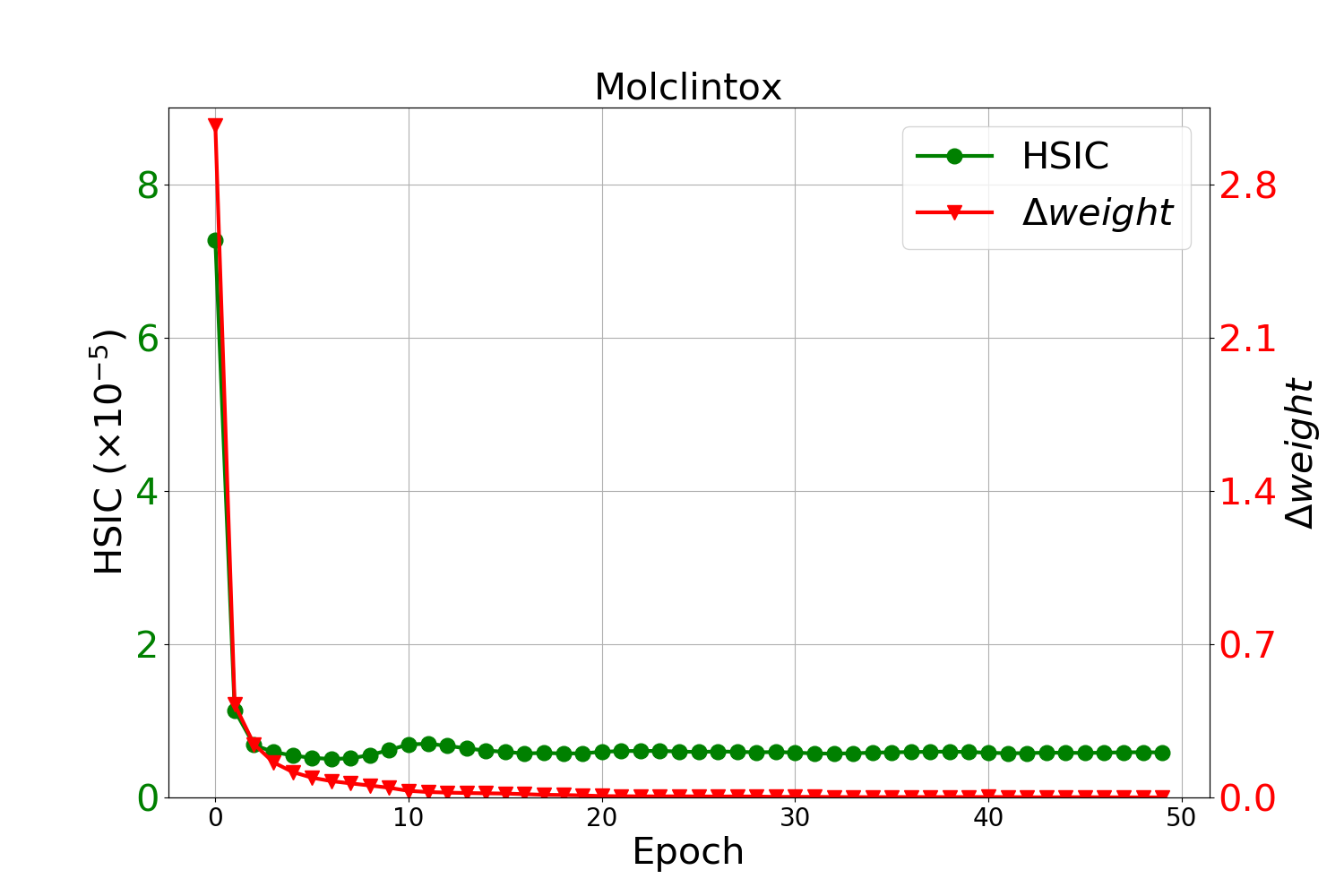}
\label{fig:gradient}
}
\caption{Convergence rate analysis of CVD regularizer.}
\label{fig:Convergence}
\end{figure*}


\paratitle{Comparison with Different Decorrelation Methods.} As there may be other ways to decorrelate the variables in GNNs by sample reweighting methods, one question arises naturally: is our proposed decorrelation framework a more suitable strategy for graphs? To answer this question, we compare the following alternatives:
\begin{itemize}
    \item GraphSAGE-Decorr: This method directly decorrelates each dimension of graph-level representations learned by GraphSAGE.
    \item StableGNN-NoCVD-Decorr: This method decorrelates each dimension of concatenated high-level representation $\mathbf{H}$ learned by StableGNN-NoCVD.
    \item StableGNN-NoCVD-Distent: This method forces the high-level representation learned by StableGNN-NoCVD to be disentangled by adding a HSIC regularizer to the overall loss.
\end{itemize}
The results are shown in Table~\ref{Tab::Difference decorrelation}. Compared with these potential decorrelation methods, StableGNN achieves better results consistently, demonstrating that decorrelating the representations by the cluster-level granularity is a more suitable strategy for graph data. Moreover, we find that GraphSAGE-Decorr/StableGNN-NoCVD-Decorr shows worse performance than GraphSAGE/StableGNN-NoCVD, and the reason is that if we aggressively decorrelate single dimension of embeddings, it will inevitably break the intrinsic semantic meaning of original data. Furthermore, StableGNN-NoCVD-Distent forces the high-level representations to be disentangled, which changes the semantic implication of features, while
StableGNN learns sample weights to adjust the data structure
while the semantics of features are not affected. Overall, StableGNN is a general and effective framework compared with all the potential ways.

\paratitle{Convergence Rate and Parameter Sensitivity Analysis.} We first analyze the convergence rate of our proposed causal variable distinguishing regularizer. We report the summation of HSIC value of all high-level representation pairs and the difference of learned weights between two consecutive epochs during the weights learning procedure in one batch on three relatively smaller datasets in Figure~\ref{fig:Convergence}. As we can see, the weights learned by CVD regularizer could achieve convergence very fast while reducing the HSIC value significantly. In addition, we study the sensitivity of the number of high-level representations and report the performance of StableGNN-NoCVD and StableGNN based on the same pre-defined number of clusters in Figure~\ref{fig:pre-defined maximum}. StableGNN outperforms StableGNN-NoCVD on almost all cases, which well demonstrates the robustness of our methods with the number of pre-defined clusters and the effectiveness of the proposed CVD regularizer. Note that our framework could learn the appropriate number of clusters in an end-to-end way, i.e., some clusters might not be used by the assignment matrix.

\begin{figure*}[!htbp]
\centering
\subfigure[Molbace($\uparrow$)]{
\includegraphics[ width=5.5cm]{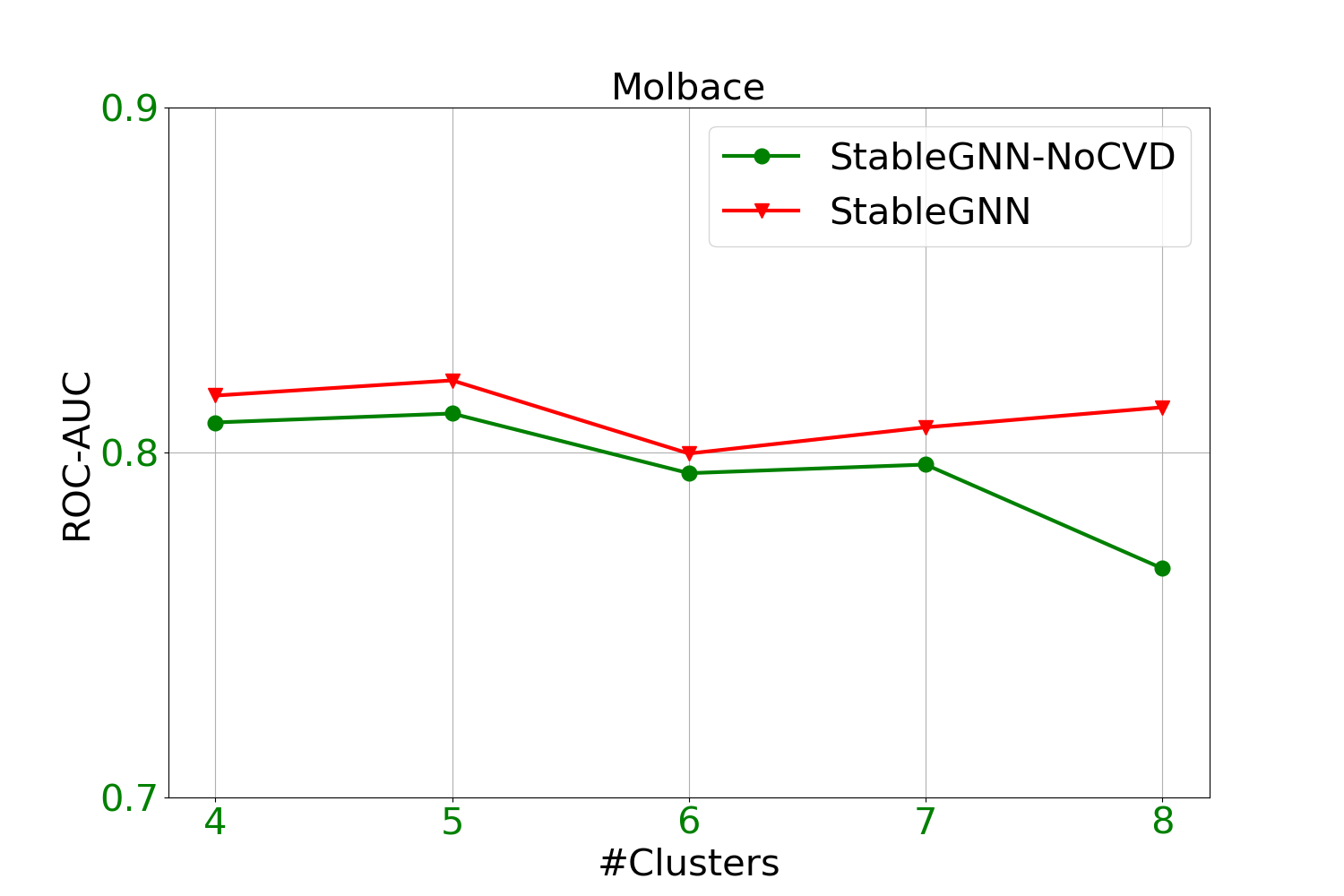}
\label{fig:gradient}
}
\subfigure[Molbbbp($\uparrow$)]{
\includegraphics[ width=5.5cm]{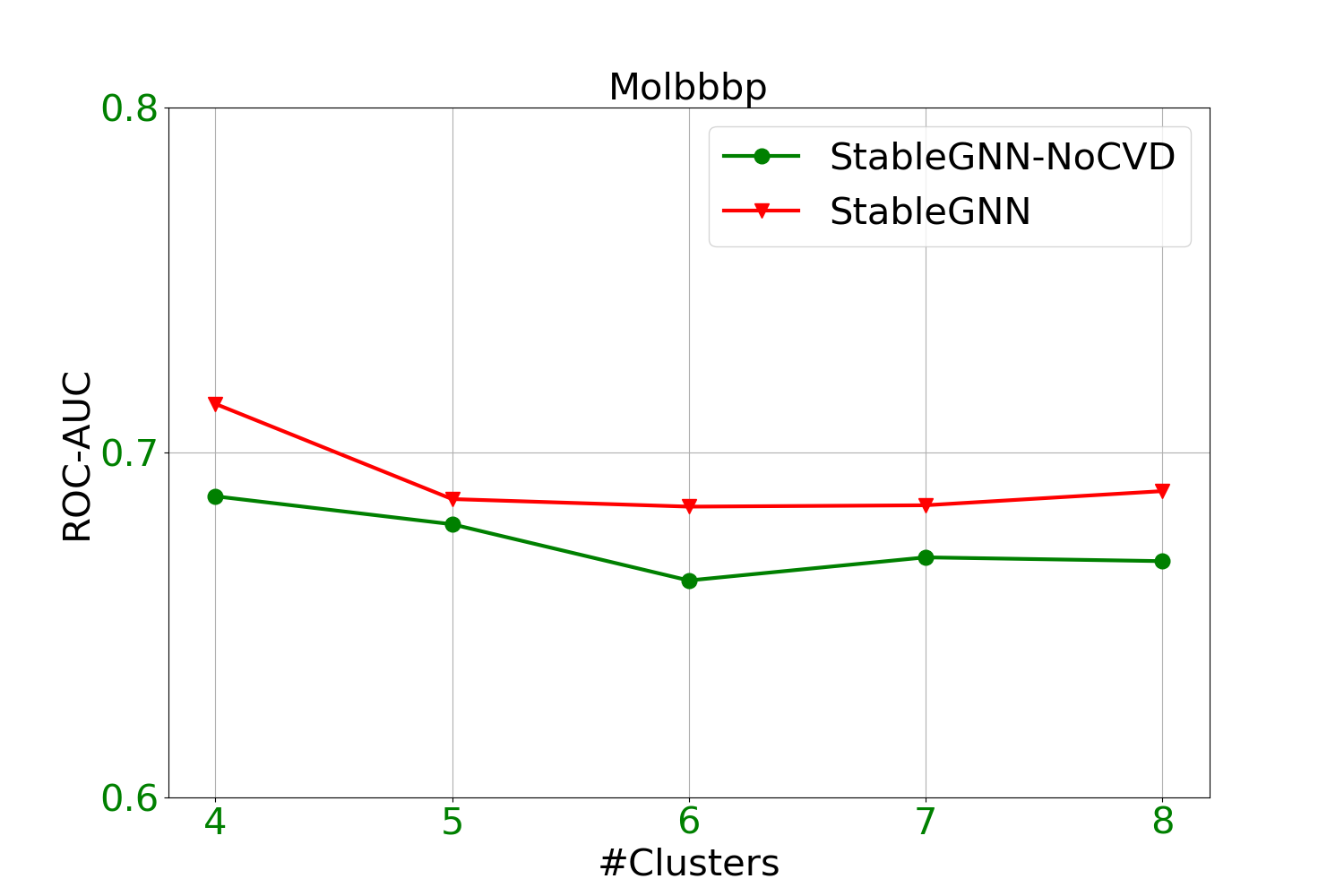}
\label{fig:gradient}
}
\subfigure[Molclintox($\uparrow$)]{
\includegraphics[width=5.5cm]{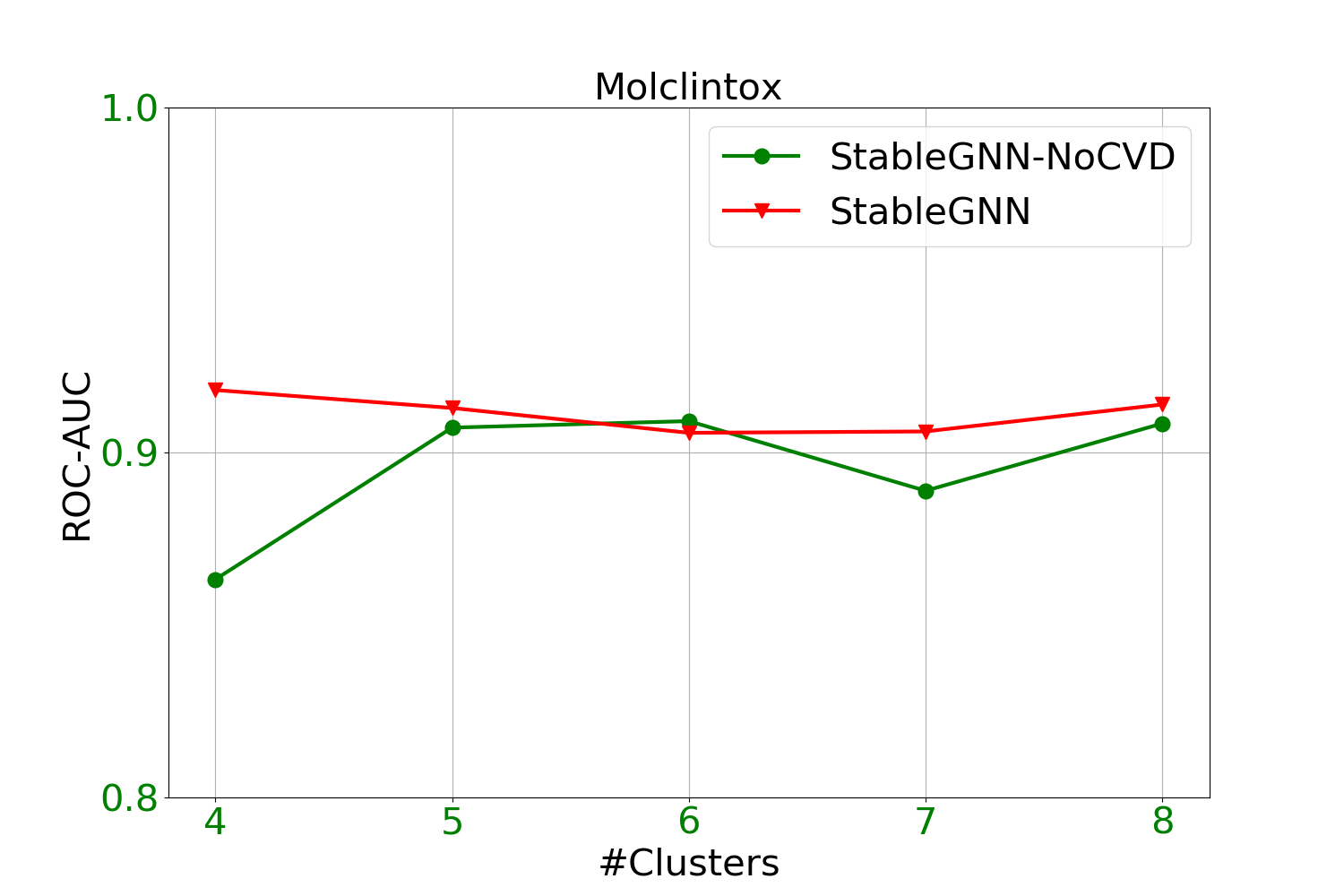}
\label{fig:gradient}
}
\caption{Sensitivity of the pre-defined maximum number of clusters.}
\label{fig:pre-defined maximum}
\end{figure*}
\begin{figure*}[!htbp]
	\centering
	\includegraphics[width=18cm]{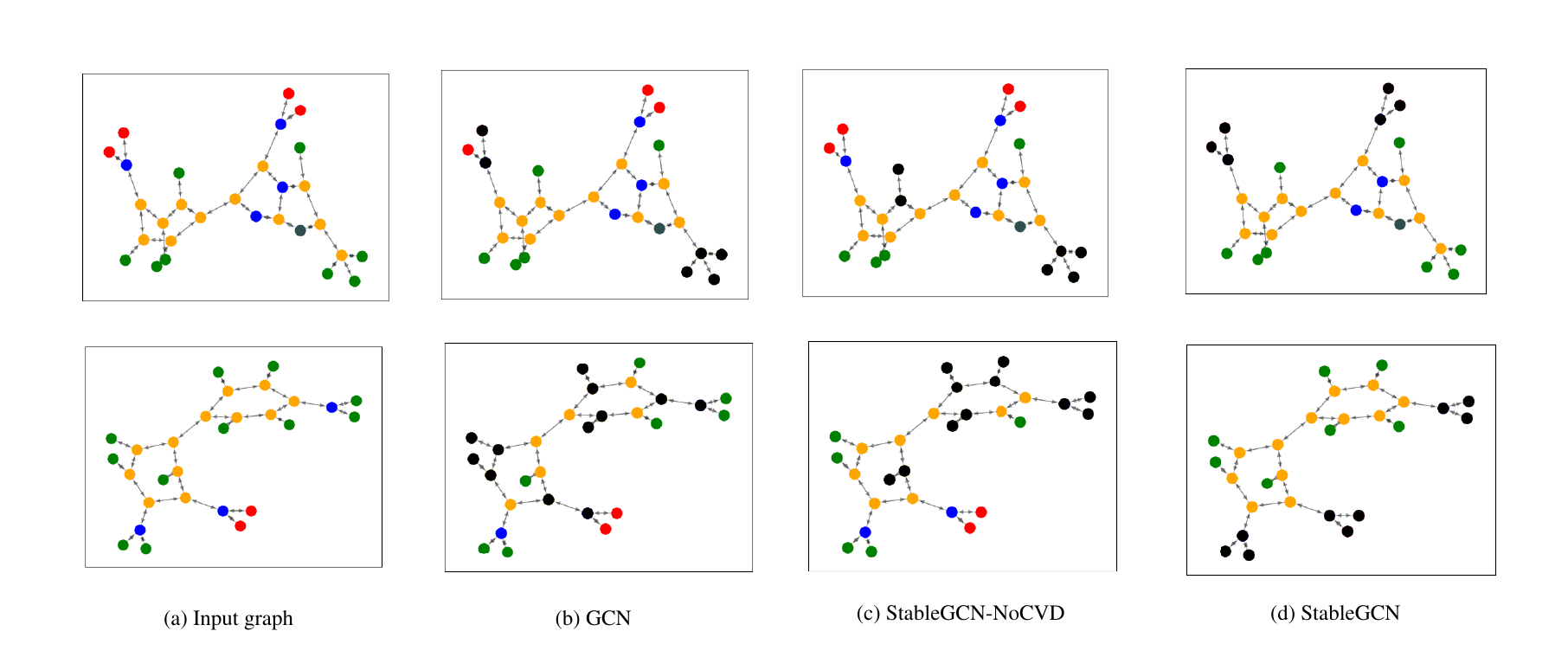}
	\caption{Explanation instance for MUTAG dataset. The blue, green, red and yellow colors represent N, H, O, C atoms, respectively. The top important subgraphs selected by GNNExplainer are viewed in black color (keep the top 6 important edges). The picture is best viewed in color.}
	\label{fig:MUTAG explainer}
\end{figure*}
\paratitle{Interpretability Analysis.} In Figure~\ref{fig:MUTAG explainer}, we investigate explanations for graph classification tasks on MUTAG dataset. In the MUTAG example, colors indicate node features, which represent atoms. StableGNN correctly identifies chemical $\text{NO}_2$ and $\text{NH}_2$, which are known to be mutagenic~\cite{luo2020parameterized} while baselines fail in. These cases demonstrate that our model could utilize more interpretable structures for prediction. Moreover, the difference of StableGCN-NoCVD between StableGCN indicates the necessity of  integrating two components in our framework.

\section{Conclusion and Future work}
In this paper, we study a practical but seldom studied problem: generalizing GNNs on out-of-distribution graph data. We analyze the problem in a causal view that the generalization of GNNs will be hindered by the spurious correlation among subgraphs. To improve the stability of existing methods, we propose a general causal representation learning framework, called StableGNN, which integrates graph high-level variable representation learning and causal effect estimation in a unified framework. Extensive experiments well demonstrate the effectiveness, flexibility, and interpretability of the StableGNN.
\par In addition, we believe that this paper just opens a direction for causal representation learning on graphs. As the most important contribution, we propose a general framework for causal graph representation learning: graph high-level variable representation learning and causal variable distinguishing, which can be flexibly adjusted for specific tasks. For example, besides molecules, we could adjust our framework to learn the causal substructures of proteins. The substructures of proteins play a crucial causal role in determining their properties and functions. Different substructures can confer distinct functionalities and characteristics on the proteins. To have a broader impact, we believe the idea could also spur causal representation learning in other areas, like object recognition~\cite{riesenhuber2000models}, multi-modal data fusion~\cite{wang2021survey}, and automatic driving in wild environments~\cite{bojarski2016end}.

\section*{Acknowledgments}
This work was supported in part by the National Natural Science
Foundation of China (No. U20B2045, 62192784, 62322203, 62172052, 62002029, 62141607, U1936219). This work was also supported in part by National Key R\&D Program of China (No. 2018AAA0102004).


%





\ifCLASSOPTIONcaptionsoff
  \newpage
\fi
\bibliographystyle{IEEEtran}
\bibliography{ref}



%

%

\begin{IEEEbiography}[{\includegraphics[width=1in,height=1.25in,clip,keepaspectratio]{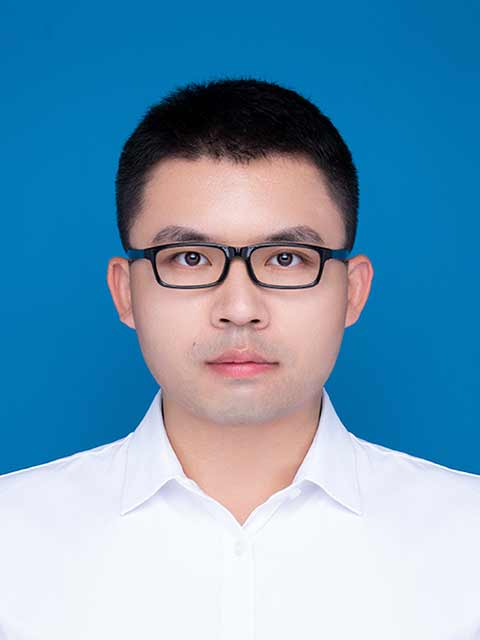}}]{Shaohua Fan}
received the Ph.D. degree in 2022 from Beijing University of Posts and Telecommunications. He was a visiting student at Mila for one year. Currently, he is a Postdoc researcher in the Department of Computer
Science of Tsinghua University.
His main research interests including graph mining, causal machine learning, and causal discovery. He has published several papers in major international conferences and journals, including NeurIPS, KDD, WWW and TNNLS etc.

\end{IEEEbiography}

\begin{IEEEbiography}[{\includegraphics[width=1in,height=1.25in,clip,keepaspectratio]{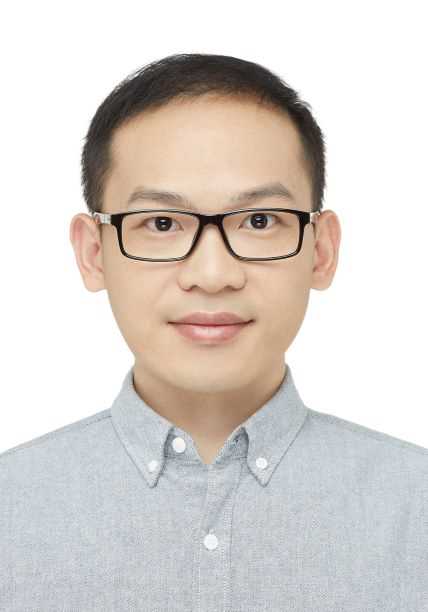}}]{Xiao Wang}
is an Associate Professor at Beihang University. He was an Associate Professor at Beijing University of Posts and Telecommunications. He received his Ph.D. degree from the School of Computer Science and Technology, Tianjin University, Tianjin, China, in 2016. He was a postdoctoral researcher in Tsinghua University. His current research interests include data mining, social network analysis, and machine learning. Until now, he has published
more than 70 papers in conferences such as NeurIPS, AAAI, IJCAI, WWW, KDD, etc. and journals such as IEEE TKDE, IEEE Trans. on Cybernetics, etc.
\end{IEEEbiography}


\begin{IEEEbiography}[{\includegraphics[width=1in,height=1.25in,clip,keepaspectratio]{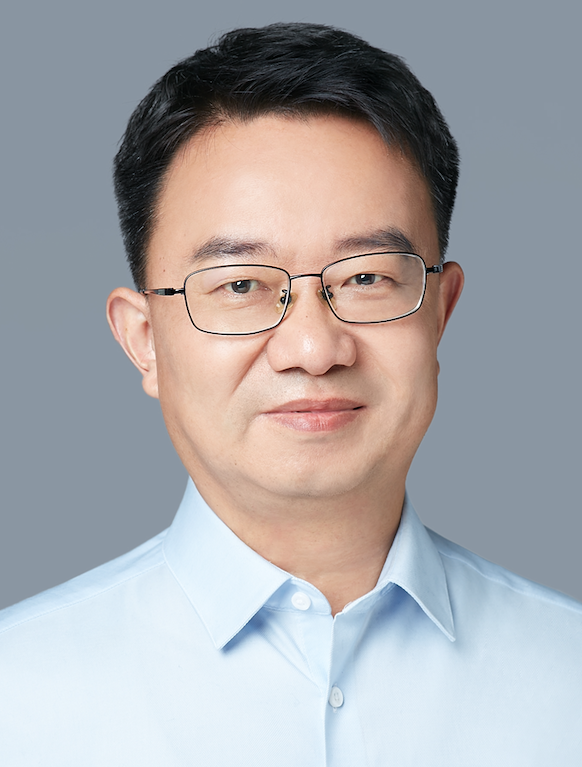}}]{Chuan Shi}
received the B.S. degree from the Jilin University in 2001, the M.S. degree from
the Wuhan University in 2004, and Ph.D. degree from the ICT of Chinese Academic of Sciences
in 2007. He joined the Beijing University of Posts and Telecommunications as a lecturer in 2007,
and is a professor and deputy director of Beijing Key Lab of Intelligent Telecommunications Software and Multimedia at present. His research interests are in data mining, machine learning, and evolutionary computing. He has published
more than 100 papers in refereed journals and conferences, such as TKDE, KDD, WWW, NeurIPS, and ICLR.
\end{IEEEbiography}

\begin{IEEEbiography}[{\includegraphics[width=1in,height=1.35in,clip,keepaspectratio]{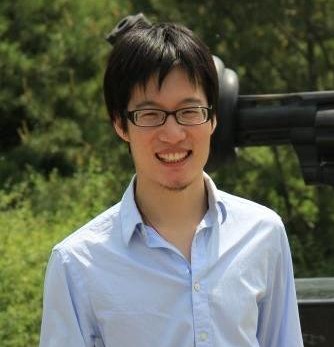}}]{Peng Cui} is an Associate Professor with tenure in Tsinghua University. He got his PhD degree from Tsinghua University in 2010. His research interests include causally-regularized machine learning, network representation learning, and social dynamics modeling. He has published more than 100 papers in prestigious conferences and journals in data mining and multimedia.  He received ACM China Rising Star Award in 2015, and CCF-IEEE CS Young Scientist Award in 2018. He is now a Distinguished Member of ACM and CCF, and a Senior Member of IEEE.
\end{IEEEbiography}
\begin{IEEEbiography}[{\includegraphics[width=1in,height=1.25in,clip,keepaspectratio]{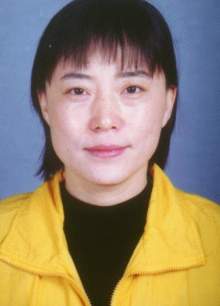}}]{Bai Wang}
received the B.S. degree from the Xian Jiaotong University, Xian, China and Ph.D. degree from the Beijing University of Posts and Telecommunications, Beijing, China. And she is currently a professor of computer science in BUPT. She was the director of Beijing Key Lab of Intelligent Telecommunications Software and Multimedia.
\end{IEEEbiography}




\end{document}